\documentclass[runningheads]{llncs}
\usepackage[T1]{fontenc}
\usepackage{graphicx}
\usepackage{booktabs}
\usepackage[misc]{ifsym}
\newcommand{\corr}{(\Letter)}
\usepackage{amssymb, amsfonts, mathtools}
\usepackage[dvipsnames, table]{xcolor} 
\usepackage{tabularx, booktabs, multirow, makecell, array}
\usepackage{subcaption}
\usepackage{float, wrapfig, pgfplots}
\usepackage{enumitem, needspace, algorithm, algpseudocode, comment, xspace, textcomp, url}
\usepackage{microtype}
\usepackage[nameinlink]{cleveref} 

\crefname{section}{Sec.}{Secs.}
\crefname{appendix}{App.}{Apps.}
\crefname{table}{Tab.}{Tabs.}
\crefname{app}{App.}{App.}
\Crefname{section}{Sec.}{Secs.}
\Crefname{figure}{Fig.}{Figs.}
\Crefname{appendix}{App.}{Apps.}
\Crefname{definition}{Def.}{Defs.}
\crefname{algorithm}{Alg.}{Algs.}
\Crefname{algorithm}{Alg.}{Algs.}
\crefname{appendix}{App.}{Apps.}
\Crefname{appendix}{App.}{Apps.}

\crefalias{section}{appendix}

\newcommand{\metric}{\text{TCV }}

\newcommand{\mechanismOne}{Path-based Message Passing\xspace}

\newcommand{\mechanismTwo}{Static and Dynamic Propagation Separation\xspace}
\newcommand{\proposed}{\textsc{GLIDE}\xspace}
\newcommand{\fullproposed}{Graph Layer for Inference in Dynamic Environments\xspace}
\setlist[enumerate]{leftmargin=*}

\newcommand{\TCVSolar}{\ensuremath{0.323}}

\newcommand{\TCVElectricity}{\ensuremath{0.194}}
\newcommand{\TCVExchange}{\ensuremath{0.007}}
\newcommand{\TCVGermany}{\ensuremath{0.997}}
\newcommand{\TCVFrance}{\ensuremath{0.969}}
\newcommand{\TCVETThOne}{\ensuremath{0.679}}

\newcommand{\TCVStatic}{\ensuremath{0.296}}
\newcommand{\TCVEasy}{\ensuremath{0.296}}

\newcommand{\TCVHard}{\ensuremath{0.874}}
\newcommand{\TCVDynamic}{\ensuremath{0.874}}

\newcommand{\modify}[1]{{\textcolor{black}{#1}}}

\newcommand{\fixme}[1]{\textcolor{black}{#1}}

\newcommand{\Prealmax}{85.7}

\graphicspath{{imgs/}}

\begin{document}
\setlength{\tabcolsep}{1pt}
\title{When GNNs Fail: Quantifying and Overcoming Temporal Correlation Volatility in Time Series}

\titlerunning{Overcoming Temporal Correlation Volatility in GNNs}

\author{Chen Shao\inst{1} \and
Yue Wang\inst{1} \and
Zhenyi Zhu\inst{2} \and
Zhanbo Huang\inst{1} \and
Tobias K\"afer\inst{1} \and
Zonghan Wu\inst{3} \and
Danai Koutra\inst{4}\corr}

\authorrunning{C. Shao et al.}

\toctitle{When GNNs Fail: Quantifying and Overcoming Temporal Correlation Volatility in Time Series}
\tocauthor{Chen Shao, Yue Wang, Zhenyi Zhu, Zhanbo Huang, Tobias K\"afer, Zonghan Wu, Danai Koutra}

\institute{Karlsruhe Institute of Technology, Germany\\
\email{cc7738@kit.edu, urryl@student.kit.edu}\\
\email{zhanbo.huang@rwth-aachen.de, tobias.kaefer@kit.edu}
\and
The Hong Kong University of Science and Technology, Hong Kong SAR\\
\email{zzhubh@connect.ust.hk}
\and
East China Normal University, China\\
\email{zhwu@sem.ecnu.edu.cn}
\and
University of Michigan, USA\\
\email{dkoutra@umich.edu}}


\maketitle            

\begin{abstract}
Modeling multivariate time series by representing them as graphs, where individual series act as nodes and pairwise temporal correlations serve as edges, has gained significant traction. Recent advances in Graph Neural Networks (GNNs) have demonstrated strong performance by assuming a static graph topology and aggregating information from neighboring series. In this work, we investigate the representational power of GNNs for forecasting under both static and dynamic settings (i.e., when pairwise correlations evolve drastically over time) and identify critical limitations in current architectures. To formalize this, we first propose Temporal Correlation Volatility (TCV), a model-agnostic metric designed to quantify the distributional evolution of these latent structures. We establish a clear connection between TCV and performance degradation, demonstrating that many popular models, including Transformers, generalize poorly in high-TCV settings and are often outperformed by simple structure-agnostic baselines. To address these limitations, we propose \fullproposed\ (\proposed), a novel GNN layer enhanced by two theoretically grounded design mechanisms: (D1) \mechanismOne, which captures path-based neighborhoods and (D2) \mechanismTwo, which identifies optimal dynamics via local static approximation. These components significantly improve learning under dynamic topology while preserving robustness in static scenarios. Extensive experiments on synthetic and real-world benchmarks show that \proposed improves average performance by up to 45.6\% across static and dynamic settings, with the largest gain reaching 85.7\%. The source code is available at \url{https://github.com/ChenS676/GLIDE}. 
\keywords{Multivariate time series forecasting \and Graph neural networks \and Dynamic graph structure learning \and Temporal correlation volatility}
\end{abstract}

\begin{figure}[b!]
    \centering
    \includegraphics[width=0.88\linewidth]{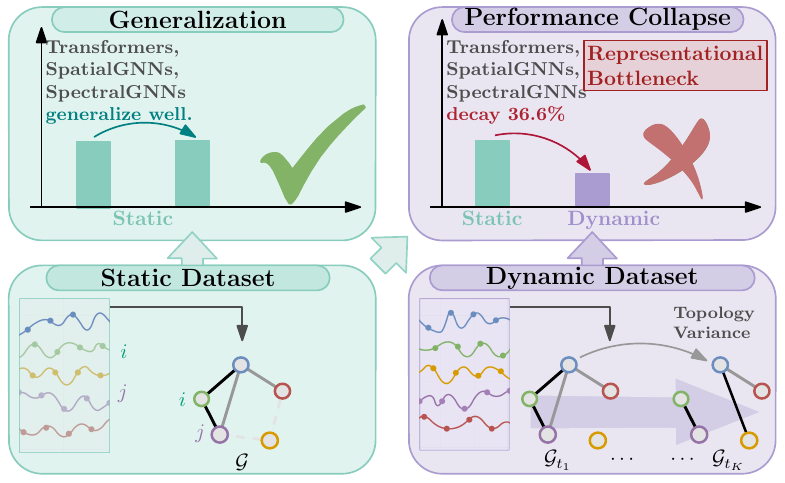}
    \caption{Core motivation of our study: GNN/Transformers fail when the underlying graph structure undergoes drastic evolution (\Cref{sec:motivating example}).}\label{fig:motivate}
\end{figure}%

\section{Introduction}
Multivariate time series forecasting (MTSF) is a pivotal challenge in domains such as energy management, weather prediction and financial modeling \cite{shao2025data,sezer2020financial}. Recently, Graph Neural Networks (GNNs) have transformed this field by treating time series as graph-structured processes, where variables are modeled as nodes in a relational network \cite{wu2020connecting}. Within this paradigm, identifying the graph topology, both its connectivity and relational strength, is a critical prerequisite. This topology serves as the structural foundation for spatial message passing, dictating how information is aggregated across variables. An inaccurate or static topology can lead to the "over-smoothing" of distinct signals or the aggregation of spurious correlations, fundamentally degrading the model's ability to resolve complex dependencies.

Researchers often model the graph using a predefined adjacency matrix derived from explicit structures, infer a latent topology via global similarity metrics computed over the entire historical sequence (e.g., cosine similarity, Gaussian kernels~\cite{li2018diffusionconvolutionalrecurrentneural}, or Pearson correlation~\cite{cao2021spectraltemporalgraphneural,shang2021discrete,liu2022multivariate}), or use network-discovery methods~\cite{SafaviSK17,BrugereGB18}.
Global estimation fundamentally assumes a static graph topology, wherein the latent relational structure and pairwise dependencies are treated as approximately invariant over the entire temporal horizon. This approach has become standard practice in prominent modern architectures~\cite{wu2020connecting,liu2022multivariate,yi2023fouriergnn}. From a theoretical perspective, this global estimation amounts to a low-rank approximation of what is inherently a time-varying manifold. 

The limitations of this static paradigm become evident in highly dynamic environments where inter-variable relationships undergo abrupt structural breaks \cite{liu2022multivariate,shao2025data}. For instance, in power grid management, the dependency between two energy nodes may shift abruptly due to localized grid failures or fluctuating demand cycles \cite{nauck2022dynamic}; similarly, macroeconomic shocks can trigger structural breaks that fundamentally alter dependencies among financial assets, rendering global correlation estimates obsolete \cite{sezer2020financial}. In such settings, existing approaches fail to explicitly encode temporal variability in inter-series dependencies, leading to sub-optimal forecasting performance. We demonstrate in \Cref{tab:synthetic-results} that, in the worst-case scenario, performance degradation can reach up to 36.6\%, including popular methods such as Transformers \cite{wu2021autoformer}. 
Despite these advances, existing methodologies suffer from three fundamental limitations. First, contemporary temporal polynomial approaches aim to model dynamic correlations through time-varying coefficients \cite{liu2022multivariate,yue2025olinearlinearmodeltime}. However, these adaptive parameters frequently fail to synchronize with distribution shifts or macro-structural variances in the underlying graph topology \cite{campbell2023dyndepnetlearningtimevaryingdependency} and often exhibit suboptimal robustness in real-world benchmarks characterized by evolving adjacency patterns \cite{shao2025data}. Second, the prevailing extract-then-forecast paradigm creates a representational decoupling: by treating graph estimation and time-series prediction as disjoint optimization objectives, these models fail to propagate structural uncertainty through the message-passing layers \cite{shang2021discrete}. Finally, the absence of a quantitative diagnostic framework to measure topology perturbation means that the boundary conditions under which these models fail remain poorly understood, necessitating a more rigorous, metric-driven approach to structural dynamics.  

In contrast to the common assumption of static conditions, this work contributes by specifically investigating the representational power of graph-based approaches in non-static settings. This paper aims to address these limitations by systematically analyzing the behavior of graph patterns within dynamic MTSF frameworks. We begin by providing a theoretical characterization of correlation matrices under non-static conditions. Our findings reveal that direct correlations in dynamic graph features no longer adhere to static assumptions; instead, in dynamic scenarios, these direct dependencies become highly unstable. To quantify the inherent instability of these evolving structures, we introduce \textit{Temporal Correlation Volatility} (TCV), a metric designed to measure the fluctuation intensity of pairwise dependencies between variables. In this work, we use it as a diagnostic tool: through an extensive empirical study involving {18 baseline models}, we demonstrate that the performance degradation of existing forecasting frameworks is highly correlated with the structural volatility(quantified by TCV) of the underlying graph. Building on these insights, we propose a novel \proposed layer that explicitly accounts for temporal variability in graph structures through two key design components: (D1)~\mechanismOne, which constructs path-based interaction features to induce more robust graph representations; and (D2)~\mechanismTwo, which addresses temporal shifts by employing an orthogonal basis decomposition to decouple correlations into persistent static and transient dynamic components. In summary, our contributions are as follows:
\begin{itemize}
    \item \textbf{Quantitative Characterization of Topology Variance.} We introduce \emph{Temporal Correlation Volatility} (TCV), a principled metric for quantifying non-static topology shifts in multivariate time series. We provide a rigorous theoretical foundation for TCV and empirically establish its correlation with forecasting performance degradation. This allows TCV to serve as a diagnostic framework for assessing GNN robustness under dynamic structural breaks.
    \item \textbf{The \proposed Architecture for Dynamic Inductive Bias.} We develop \proposed, featuring a novel GNN layer that incorporates \emph{higher-order interactions} and \emph{time-evolving dynamics decomposition}. \fixme{We establish a consistency-style guarantee for a kernel-weighted estimator of the time-varying topology, which motivates D2's decomposition into persistent global structure and transient local fluctuations.} This ensures the model endogenously handles structural uncertainty.
    \item \textbf{Comprehensive Benchmarking.} {We conduct extensive experiments against {18 baselines across eight benchmarks}, including both static and dynamic topology regimes. Results show that \proposed consistently outperforms state-of-the-art methods in both static and dynamic settings, yielding average improvements of $45.6\%$ and reaching a maximum gain of $\Prealmax\%$ on the Exchange Rate benchmark at the 12-step forecasting horizon.}
\end{itemize}
\section{Preliminaries}
\label{sec:preliminary}
In this section, we present problem formulation and key definitions that we use in our work. Let $\mathbf{x}_{t} \in \mathbb{R}^N$ be a multivariate observation at time $t$, with $x_{t}[i]$ as the $i$-th variable. We define the look-back window of length $T$ as $\mathbf{X} = [\mathbf{x}_{t-T+1}, \dots, \mathbf{x}_{t}] \in \mathbb{R}^{N \times T}$. We aim to learn a mapping $f: \mathbf{X} \to \mathbf{Y}$, where target $\mathbf{Y}$ is either a single-step vector $\mathbf{x}_{t+1}$ or multi-horizon sequence $\{\mathbf{x}_{t+1}, \dots, \mathbf{x}_{t+P}\}$. From a graph-based perspective, $\mathbf{X}$ is a temporal graph sequence, where variables are represented as nodes and cross-variable dependencies are encoded in an adjacency matrix. Formally, the graph series is $\mathcal{G}(t) = (\mathcal{V}, \mathcal{E}_t, \mathbf{X}_t, \mathbf{A}_t)$, where $\mathcal{V}$ is the vertex set, $\mathbf{X}_t \in \mathbb{R}^{N \times B}$ are node features from a sliding window of length $B$ and $\mathbf{A}_t \in \mathbb{R}^{N \times N}$ is the sparsified weighted adjacency structure.
\begin{definition}[Topology-based Graph]\label{def:adj}The adjacency matrix $\mathbf{A} \in \mathbb{R}^{N \times N}$ is constructed as a $k$-nearest neighbor ($k$-NN) graph, where each entry $\mathbf{A}_{ij}$ is defined as $a_{ij}$ if $v_j \in \mathcal{N}^k(v_i)$ and $0$ otherwise. $\mathcal{N}^k(v_i)$ denotes the set of $k$ nodes most similar to $v_i$ according to a chosen distance metric.\end{definition}
In the absence of a predefined topology, the adjacency structure is inferred from the empirical properties of the observed signals.
\begin{definition}[Similarity-based Graph]\label{def:sim_graph}
Given a data matrix $\mathbf{X} \in \mathbb{R}^{N \times T}$, let $\tilde{\mathbf{X}}$ denote row-wise standardized node signals. The similarity-based adjacency matrix $\hat{\mathbf{A}} \in \mathbb{R}^{N \times N}$ is defined via empirical correlation: 
$\hat{a}_{ij} = \frac{1}{T}\textstyle\sum_{t=1}^T \tilde{x}_{it}\tilde{x}_{jt}$, implying $\hat{\mathbf{A}} = \frac{1}{T}\tilde{\mathbf{X}}\tilde{\mathbf{X}}^{\top}$
\noindent Here, $\hat{a}_{ij} \in [-1,1]$ is the Pearson correlation coefficient, measuring the linear dependence between node signals $v_i$ and $v_j$.
\end{definition}
\textbf{Remark.} While alternative similarity measures such as Dynamic Time Warping (DTW) or non-linear kernels exist \cite{wu2020connecting}, this work focuses on linear correlation properties. This choice is motivated by their widespread application \cite{wu2020connecting}, high computational efficiency and scalability for high-dimensional graphs \cite{8347162}. In this work, we identify the neglect of these linear time-varying dependencies as a fundamental limitation of existing dominant methods.
\subsection{Measuring Structural Dynamics: Temporal Correlation Volatility}
\label{subsec:TCV}
In real-world scenarios, the dependencies between nodes are time-varying. Motivated by work on graph similarity and change detection~\cite{KoutraVF13,BerlingerioKEF13},  we introduce Temporal Correlation Volatility (TCV), a model-agnostic measure that quantifies the intensity of temporal shifts and characterizes the distributional evolution of spatial correlation structures over time.
\begin{definition}[Temporal Correlation Volatility]
\label{def:TCV}
{For each timestamp $t$, we first construct a similarity graph $\hat{\mathbf{A}}_t$ from a sliding window of observations $\mathbf{X}_t \in \mathbb{R}^{N\times B}$ according to \Cref{def:sim_graph}. The resulting sequence $\{\hat{\mathbf{A}}_1,\ldots,\hat{\mathbf{A}}_T\}$ captures the time-varying dependency structure of the multivariate series.} Given a multivariate time series $\mathbf{X}_t \in \mathbb{R}^{N \times T}$ and an inferred similarity graph $\hat{\mathbf{A}}_t \in [-1, 1]^{N \times N}$ at time $t$, the \metric is defined as the average rate of topological change quantified by the Frobenius norm $\|\cdot\|_F$:
\begin{equation}
    \mathrm{TCV} = \frac{1}{2(T-1)N} \sum_{t=2}^{T} \|\hat{\mathbf{A}}_{t} - \hat{\mathbf{A}}_{t-1}\|_F
    = \frac{1}{2(T-1)N} \sum_{t=2}^{T} \sqrt{\sum_{i,j} (\hat{a}_{ij}^{(t)} - \hat{a}_{ij}^{(t-1)})^2},
    \label{eq:TCV}
\end{equation}
Empirical similarity-based adjacency matrix $\hat{\mathbf{A}}_t$ is derived from sliding windows of the input signals (\Cref{def:sim_graph}). {\metric quantifies the temporal volatility of the graph series by averaging the magnitude of change between consecutive graph states.} {The instantaneous topological transition is measured as the Frobenius distance between consecutive similarity matrices, $\|\hat{\mathbf{A}}_{t+1}-\hat{\mathbf{A}}_t\|_F$.} Since each $\hat{\mathbf{A}}_t$ is normalized per variable to the range $[-1, 1]$, the inclusion of the $2N$ factor in the denominator ensures that {$0 \le \mathrm{TCV} < 1$}. \end{definition}
\textbf{\metric in Practice.} The \metric effectively distinguishes between structurally static regimes ($\mathrm{TCV}\to0$), where the graph backbone remains stable; for example, the \textit{Exchange Rate} dataset exhibits a very low value ($\mathrm{TCV} \approx 0.007$), indicating a highly stable long-term topology in the foreign exchange market. In contrast, high $\mathrm{TCV}\to 1$ indicates that consecutive graph states change substantially over time; that is, the \textit{Solar} and \textit{German Energy} datasets~\cite{shao2025data} attain substantially higher values ($\mathrm{TCV} > 0.5$), as they capture major disruptions such as the Crimea crisis and the Russia--Ukraine war. {Higher \metric values correspond to larger deviations between consecutive graph states and therefore stronger violations of the implicit stationarity assumptions adopted by many existing graph-based forecasting models. As we demonstrate in the following motivating example, forecasting performance deteriorates significantly as topological volatility increases.}

\subsection{Connecting Topology Dynamics and Performance Variation}\label{sec:motivating example}
We then investigate the performance discrepancies across four dominant representative categories under both static and dynamic structural regimes and delineate the boundary conditions under which current dominant models fail. Specifically, we demonstrate that prevailing methods struggle to generalize to highly non-static sequences characterized by rapidly evolving, time-varying topologies. 
\paragraph{Synthetic Data.} 
Following \cite{liu2022multivariate}, we generate a multivariate time series (MTS) of length $T$ with $N$ variables using a set of $N_w$ graph transition matrices $\{\mathbf{A}_1,\dots,\mathbf{A}_{N_w}\}$. The active graph switches sequentially every $T_s=T_{\mathcal G}/N_w$ timesteps. While $\mathbf{A}_i$ is active, observations are generated through random walks governed by $\mathbf{A}_i$. Thus, $N_w$ determines the number of graph states and $T_s$ the switching frequency. The static case uses a single graph ($N_w=1$), whereas the dynamic case employs multiple graph states ($N_w>1$). Full details are given in App.\,\Cref{algorithm:syn}.

As shown in \Cref{tab:synthetic-results}, spatial, spectral and Transformer architectures remain competitive in static regimes but suffer significant degradation on dynamic datasets. Notably, Transformer models experience the most acute collapse (up to 36.6\%), suggesting that implicit graph induction via self-attention is highly susceptible to structural instability in high-dynamic scenarios. Explicit graph models exhibit error increases between 15.9\% and 27.2\%. We show later that, \Cref{sec:eval} also confirms these gaps persist across real-world benchmarks. This decay stems from representational limitations rather than dataset statistics, underscored by the structure baseline; though globally inferior, it remains robust to topological fluctuations because it lacks graph-dependent inductive biases.
 
This motivating example elucidates two critical limitations of the current landscape: (i) existing graph-centric architectures fail to maintain structural robustness on datasets characterized by significant non-static, time-evolving topologies; and (ii) this robustness depends on whether the architecture carries an inductive bias for evolving topologies, rather than implicitly assuming a static graph. This finding provides rigorous empirical evidence of the representational bottlenecks in existing methodologies. It highlights a critical, yet largely unexplored, frontier in time-series forecasting: the modeling of rapid topological shifts, which we systematically investigate in this work.
\begin{table}[t]
    \centering
    \renewcommand{\arraystretch}{0.85} 
    \caption{Comparison of MAE performance degradation ($\Delta\text{MAE} = \text{MAE}_{\text{d}} - \text{MAE}_{\text{s}}$) for seven strong baselines under static (Syn-S) and dynamic (Syn-D) conditions. 
    {Note that lower MAE is better}. Higher TCV leads to consistent degradation across graph and Transformer baselines, while structure-free MLP degrades less.}
    \label{tab:synthetic-results}
    \footnotesize
    \begin{tabular}{llccc}
        \toprule
        \multirow{2}{*}{\textbf{Family}} 
        & \textbf{Method}
        & \textbf{Syn-S} 
        & \textbf{Syn-D} 
        & \textbf{Relative} \\
        & TCV & \TCVStatic 
        & \TCVDynamic 
        & \textbf{$\Delta$MAE (\%)} \\
        \midrule
        \multirow{2}{*}{\textbf{SpatialGNN}}
        & MTGNN \cite{wu2020connecting}        
        & $0.427$ & $0.508$ & $-19.0\% \downarrow$ \\
        & TPGNN \cite{liu2022multivariate}      
        & $0.456$ & $0.578$ & $-26.8\% \downarrow$ \\
        \midrule
        \multirow{2}{*}{\textbf{SpectralGNN}}
        & StemGNN \cite{cao2021spectraltemporalgraphneural}    
        & $0.497$ & $0.576$ & $-15.9\% \downarrow$ \\
        & FourGNN \cite{yi2023fouriergnn}
        & $0.651$ & $0.828$ & $-27.2\% \downarrow$ \\
        \midrule
        \multirow{3}{*}{\textbf{Transformer}}
        & Autoformer \cite{wu2021autoformer} 
        & $0.650$ & $0.888$ & $-36.6\% \downarrow$ \\
        & {Reformer \cite{Kitaev2020ReformerTE}} 
        & $0.647$ & $0.855$ & $-32.1\% \downarrow$ \\
        \midrule
        \textbf{Structure-free}
        & MLP          
        & $0.809$ & $0.927$ & $-14.6\% \downarrow$ \\
        \bottomrule
    \end{tabular}
\end{table}
\section{\proposed: \fullproposed}\label{sec:model}
Motivated by these limitations, we develop a novel layer centered on two architectural designs: (D1) \mechanismOne and (D2) \mechanismTwo. We introduce and justify this layer theoretically and then introduce our full model \proposed in \Cref{sec:glide}.
\subsection{(D1) \mechanismOne}
\subsubsection{Intuition.} When cross-variable correlations exhibit high volatility, direct edges in the adjacency matrix $\hat{\mathbf{A}}$ become susceptible to stochastic oscillations and transient noise. In contrast, indirect dependencies, particularly those derived from higher-order topological neighborhoods, often emerge as more stable indicators of underlying dynamics. For instance, during the 2021--2022 European energy crisis, as observed in the France energy dataset \cite{shao2025data}, coal and various renewable energy sources maintained a systematic, sustained coupling mediated through the fossil fuel market (e.g., Coal$\to$Carbon$\to$Renewables) rather than through direct correlations. This suggests that higher-order paths can capture latent structural relationships that remain robust despite the instability of local edges.

In \proposed, time series feature aggregation is centered on a path-based neighborhood. Mechanically,  the node representations $\mathbf{H}$ are updated via a linear weighted combination of the path-based correlation based on the similarity graph $\hat{\mathbf{A}}$. We derive one special case of the second-order neighborhood to clarify the idea. $R_{ij}^{(2)}$ represents the sum of all 2-hop paths between node $i$ and node $j$ defined as $R_{ij}^{(2)} = \sum_{m=1}^{n} \hat{A}_{im} \hat{A}_{mj} = [\hat{A}^2]_{ij}$. By generalizing this reachability to a $K$-th order expansion, we define a composite polynomial reachability element $R_{ij}^{(\text{w})}$, which represents the total weighted influence of node $j$ on node $i$ across various path lengths. With a given node temporal features $\mathbf{H}_i$ of node $i$ (from the temporal convolution layer introduced later), the local update rule for the feature vector of a specific node $i$ is formulated as:
\[\mathbf{H}_i = \sigma \left( \sum_{j} R_{ij}^{(\text{w})} \mathbf{H}_j \mathbf{W} \right),\quad R_{ij}^{(\text{w})} = \left[ \sum_{k=0}^{K} \mathbf{w}^k \hat{\mathbf{A}}^k \right]_{ij}\]
This enables the model to bypass transient first-order noise and learn from the stable, latent connectivity inherent in complex multivariate time series.
\begin{figure}[h]
    \centering
    \includegraphics[width=\linewidth]{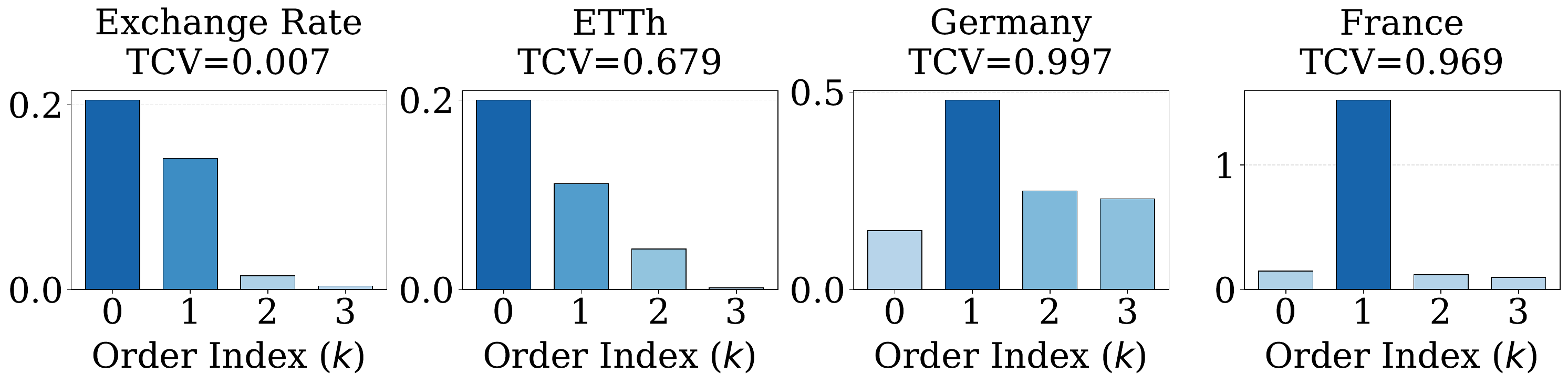}
    \caption{Distribution of learned coefficient magnitudes $\|w_k\|_2$ in Design 1. Static data (Exchange) exhibit a decay toward lower-order terms, whereas datasets with dynamic topology (Germany, France) show a marked shift in weight toward higher-order dependencies.}
    \label{fig:alpha_distribution}
\end{figure}
\subsection{(D2) \mechanismTwo} 
While real-world similarity graphs are typically assumed to be static, with topology drawn independently and identically distributed (i.i.d.) over time, this assumption is often violated in dynamic environments. Here we derive a basic form for the underlying graph structure in a dynamic setting. Without loss of generality, we consider a sequence of graph topologies $\{\mathbf{A}_t\}$ and their estimators $\{\widehat{\mathbf{A}}_t\}$. We now present the following theorem.

\begin{theorem}[Dynamic Topology Identification]\label{thm:d2-gnn}
Consider a graph topology evolving as $\mathbf{A}_t = \mathbf{A}_{t-1} + \mathbf{Z}_t$, with initial state $\mathbf{A}_0 \sim \mathcal{N}(0,\boldsymbol{\Sigma}(0))$ and temporal perturbation $\mathbf{Z}_t \sim \mathcal{N}(0,\boldsymbol{\Sigma}(t))$. Assume that the temporal perturbations $\{\mathbf{Z}_t\}_{t=1}^{T}$ are mutually independent across time and that $\boldsymbol{\Sigma}(t)$ varies smoothly with $t$. Under mild sparsity and regularity conditions, the time-varying topology at time $t$ can be identified via {an $\ell_1$-regularized inverse-covariance estimator (Graph-LASSO \cite{kolar2010estimating})} from the kernel-weighted similarity
$\widehat{\mathbf{A}}_t = \frac{\sum_s w_{st}\,\widetilde{\mathbf{X}}_s \widetilde{\mathbf{X}}_s^\top}{\sum_s w_{st}} = \frac{\sum_s w_{st}\,\widetilde{\mathbf{A}}_s}{\sum_s w_{st}},$
where $w_{st} = K(s-t)$ is a kernel weight measuring temporal proximity between time steps $s$ and $t$.
\end{theorem}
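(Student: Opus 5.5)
The argument reduces the statement to the kernel-smoothed time-varying Graph-LASSO consistency result of \cite{kolar2010estimating}. The work lies in checking that the random-walk topology model meets its hypotheses and that the kernel-weighted similarity $\widehat{\mathbf{A}}_t$ is the input that result requires.

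I read the model as specifying the instantaneous second-moment structure of the standardized observations. Conditional on $\mathbf{A}_t$, the vectors $\widetilde{\mathbf{X}}_s$ have covariance $\boldsymbol{\Sigma}_t^\star$ whose inverse carries the sparsity pattern of $\mathbf{A}_t$. Under this reading $\mathbb{E}[\widetilde{\mathbf{A}}_s \mid \mathbf{A}_s] = \boldsymbol{\Sigma}_s^\star$.

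\textbf{Step 1: smoothness of the drift.} From the recursion,
\[
\mathbf{A}_t = \mathbf{A}_0 + \sum_{r\le t}\mathbf{Z}_r .
\]
Independence of the $\mathbf{Z}_r$ gives $\operatorname{Cov}(\mathbf{A}_t)=\boldsymbol{\Sigma}(0)+\sum_{r\le t}\boldsymbol{\Sigma}(r)$. Since $\boldsymbol{\Sigma}(t)$ varies smoothly, so does this cumulative covariance, and hence so does $\boldsymbol{\Sigma}_t^\star$. Over any window of width $h$ around $t$, the increment $\|\mathbf{A}_s-\mathbf{A}_t\|$ is Gaussian with variance $\sum_{r\in(t,s]}\boldsymbol{\Sigma}(r)=O(|s-t|)$. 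I encode this as the regularity condition of Kolar--Xing: bounded derivatives of the entries of $\boldsymbol{\Sigma}_t^\star$ in rescaled time $t/T$.

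\textbf{Step 2: bias--variance bound.} I write
\[
\widehat{\mathbf{A}}_t-\boldsymbol{\Sigma}_t^\star = \Big(\widehat{\mathbf{A}}_t-\mathbb{E}\widehat{\mathbf{A}}_t\Big)+\Big(\mathbb{E}\widehat{\mathbf{A}}_t-\boldsymbol{\Sigma}_t^\star\Big).
\]
For the bias term, take $K$ symmetric, compactly supported, bandwidth $h$. A Taylor expansion gives the bound $\sum_s w_{st}(\boldsymbol{\Sigma}_s^\star-\boldsymbol{\Sigma}_t^\star)/\sum_s w_{st}=O(h)$ entrywise; symmetry of $K$ kills the first-order term, so this improves to $O(h^2)$ under second-derivative bounds.

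For the variance term, each entry is a weighted average of sub-exponential products $\tilde x_{is}\tilde x_{js}$. A Bernstein inequality applies with effective sample size $Th$. A union bound over $N^2$ entries yields
\[
\max_{ij}\big|\widehat{\mathbf{A}}_t-\mathbb{E}\widehat{\mathbf{A}}_t\big|_{ij}=O_P\!\left(\sqrt{\log N/(Th)}\right).
\]

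\textbf{Step 3: sparsistency.} With the elementwise sup-norm bound in hand, I invoke the primal--dual witness argument for Graph-LASSO (Ravikumar et al., as used in \cite{kolar2010estimating}). The inputs are the irrepresentability and incoherence condition on $\boldsymbol{\Sigma}_t^{\star}$, bounded $\|\boldsymbol{\Sigma}_t^{\star}\|_\infty$, maximal degree $d$, and a minimum edge magnitude. I choose $\lambda\asymp\sqrt{\log N/(Th)}+h^2$. The estimator $\arg\min_{\Theta\succ0}\operatorname{tr}(\Theta\widehat{\mathbf{A}}_t)-\log\det\Theta+\lambda\|\Theta\|_{1,\text{off}}$ then recovers the support and signs of $(\boldsymbol{\Sigma}_t^\star)^{-1}$, and hence the topology at time $t$, with probability tending to one.

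The bandwidth is set by balancing the two terms, $h\asymp T^{-1/5}$. This requires $d^2\log N=o(T^{4/5})$ and edges of size $\gtrsim d\,T^{-2/5}\sqrt{\log N}$. These are the "mild sparsity and regularity conditions" in the statement.

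\textbf{Main obstacle.} The random-walk model makes $\mathbf{A}_t$ itself random, and its increments are only $O_P(\sqrt{|s-t|})$, not $O(|s-t|)$. The deterministic smoothness assumed by Kolar--Xing therefore does not hold pathwise. My first attempt is to condition on the path and work on the event that the drift inside the window is bounded by $C\sqrt{h\log T}$. This holds with high probability by Gaussian maximal inequalities.

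On that event the bias is $O(\sqrt{h})$ rather than $O(h^2)$. The concentration argument goes through unchanged, but the rate degrades: $h\asymp T^{-1/2}$, with rate $T^{-1/4}$. If the statement is meant in the smooth-$\boldsymbol{\Sigma}(t)$ sense, with perturbation variance scaled as $1/T$ in rescaled time, the faster rate is restored. I would state the result under that scaling.
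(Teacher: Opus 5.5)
\textbf{There is a genuine gap: the smoothness your bias bound relies on is never established, and the target you pick is ill-posed.}

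Your estimation machinery matches the paper's skeleton in Lemma~\ref{lemma::bias} and Lemma~\ref{lemma:deviation}: an entrywise bias--variance split, exponential concentration with effective sample size $Th$, then Graph-LASSO support recovery. The reduction that feeds it is what fails.

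\textbf{Step~1 does not give smoothness.} You pass from smoothness of $t\mapsto\operatorname{Cov}(\mathbf{A}_t)=\boldsymbol{\Sigma}(0)+\sum_{r\le t}\boldsymbol{\Sigma}(r)$ to smoothness of $\boldsymbol{\Sigma}_t^\star$. But $\boldsymbol{\Sigma}_t^\star$ is a function of the \emph{realized} path $\mathbf{A}_t$, not of its covariance. A Gaussian random walk has $O_P(\sqrt{|s-t|})$ increments, so the bounded-derivative hypothesis you ``encode'' and the $O(h^2)$ bias of Step~2 are false. Your ``Main obstacle'' paragraph concedes this.

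\textbf{The proposed repair does not close it.} Your $O(\sqrt h)$ window drift already presupposes per-step variance of order $1/T$. With $O(1)$ per-step variance, the drift over the $Th$ steps of a window is $O_P(\sqrt{Th})$ and diverges. Variance of order $1/T$ is exactly Brownian scaling, whose limit paths are nowhere differentiable, so it cannot ``restore the faster rate.'' Suppose instead you shrink the per-step variance $v$ until the window drift is $O(h^2)$. That requires $v\lesssim h^3/T$. Then the total drift over the whole sample is $O_P(\sqrt{vT})$ with $\sqrt{vT}\lesssim h^{3/2}\to0$, so nothing time-varying is left.

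\textbf{The target itself is degenerate.} With nondegenerate $\boldsymbol{\Sigma}(0)$, every entry of the Gaussian matrix $\mathbf{A}_t$ is a.s.\ nonzero. So ``the sparsity pattern of $\mathbf{A}_t$'' is a.s.\ complete, and the bounded-degree hypothesis of your primal--dual witness step cannot hold.

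\textbf{The missing idea is to change what is estimated and from which data.} In the paper's proof (Appendix~\ref{app:proof}), the topology at time $t$ is the zero pattern of $\Theta(t)=\boldsymbol{\Sigma}(t)^{-1}$, the precision matrix of the \emph{perturbation law}. The kernel average is taken over the independent perturbations themselves, $\widehat S_n(t)=\sum_s w_{st}\mathbf{Z}_s\mathbf{Z}_s^\top/\sum_s w_{st}$ (Eq.~\eqref{eq:weighted-cov}), not over anything driven by the accumulated state. The $\mathbf{Z}_s\sim\mathcal{N}(0,\boldsymbol{\Sigma}(s))$ are independent, and $\boldsymbol{\Sigma}(\cdot)$ is deterministic and smooth by hypothesis. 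Hence the Taylor-expansion bias bound applies directly, and there is no pathwise roughness to control. The random-walk structure only supplies independent increments.

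With that substitution, your Steps~2--3 go through essentially as written. The remaining differences from the paper are minor:
\begin{itemize}
\item The paper uses a one-sided (past-only) kernel, hence $O(h)$ bias.
\item It uses a Chernoff bound through the Gaussian-product MGF, giving $\exp\{-Cnh\epsilon^2\}$.
\item It takes $h\asymp n^{-1/3}$ with $(N+s)=o(n^{2/3}/\log N)$.
\end{itemize}
Your symmetric kernel with $O(h^2)$ bias and $h\asymp T^{-1/5}$ is a valid interior-point variant. At the forecast boundary, however, only past observations exist and the first-order term no longer cancels.
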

\subsubsection{Justification.} \fixme{\Cref{thm:d2-gnn} shows that the time-varying topology can be recovered through a static-graph estimation problem by aggregating temporally neighbouring observations through a linear weighting.} \fixme{The second design encodes \emph{static graph} (stable, time-invariant) separately from \emph{dynamic graph} (time-varying shocks) during aggregation.} Formally, the representation at layer $\ell$ is updated as:
\[\mathbf{H} = \sigma \Big( \big( \hat{\mathbf{A}}_{\text{s}}  \mathbf{W}_{\text{s}} + \hat{\mathbf{A}}_{\text{d}}\mathbf{W}_{\text{d}} \big)\mathbf{H}  \Big)\]
where $\hat{\mathbf{A}}_{\text{s}}$ represents time-invariant topology, inferred either from time series or predefined. $\hat{\mathbf{A}}_{\text{d}}$ encodes time-dependent topology variance.
\fixme{We instantiate \Cref{thm:d2-gnn} for the dynamic branch with a box kernel of width $B$, which yields two natural choices. The \emph{raw-signal} variant computes a short-window local similarity,
\begin{equation}
\label{eq:corr_graph}
    \hat{\mathbf{A}}_{\text{d}}^{(t)} = \mathrm{ReLU}\!\big(\tfrac{1}{B}\textstyle\sum\nolimits_{k=t-B/2}^{t+B/2} \tilde{\mathbf{x}}_k \tilde{\mathbf{x}}_k^{\top} \mathbf{M}_{\text{d}}\big),
\end{equation}
a direct realisation of \Cref{thm:d2-gnn} with $K(s{-}t)=\tfrac{1}{B}\,\mathbf{1}\{|s-t|\le B/2\}$. The \emph{gradient} variant, defined in Eq.~\eqref{eq:grad_graph} below, replaces $\tilde{\mathbf{x}}_k$ with the first-order temporal difference $\nabla \tilde{\mathbf{x}}_k = \tilde{\mathbf{x}}_k - \tilde{\mathbf{x}}_{k-1}$ so the dynamic branch isolates transient shocks while the static branch absorbs the slow-varying correlation. In both variants, the learned $\mathbf{M}_{\text{d}}$ together with ReLU echo the non-negativity and sparsity that the Graph-LASSO penalty enforces in the theorem. We adopt the gradient variant in our main experiments and report an ablation against the raw-signal variant in \Cref{app:details of ablation study}.}
\begin{equation}
\label{eq:grad_graph}
\hat{\mathbf{A}}_{\text{d}}^{(t)} = \text{ReLU}(\frac{1}{B} \sum\nolimits_{k=t-B/2}^{t+B/2} (\nabla \tilde{\mathbf{x}}_{k})(\nabla \tilde{\mathbf{x}}_{k})^{\top} \mathbf{M}_{\text{d}}),
\end{equation}
{where $\mathbf{M}_{\text{d}} \in \mathbb{R}^{N \times N}$ is a learnable mixing matrix that re-scales the empirical local outer product before the ReLU, parameterizing the dynamic adjacency. \fixme{Focusing on the gradient, $\hat{\mathbf{A}}_{\text{d}}^{(t)}$ explicitly encodes the synchronization of rapid shocks and directional shifts across variables, providing a contrast to the stable correlations captured by the static backbone.}
\begin{figure}[t]
    \centering
    \includegraphics[width=0.95\linewidth]{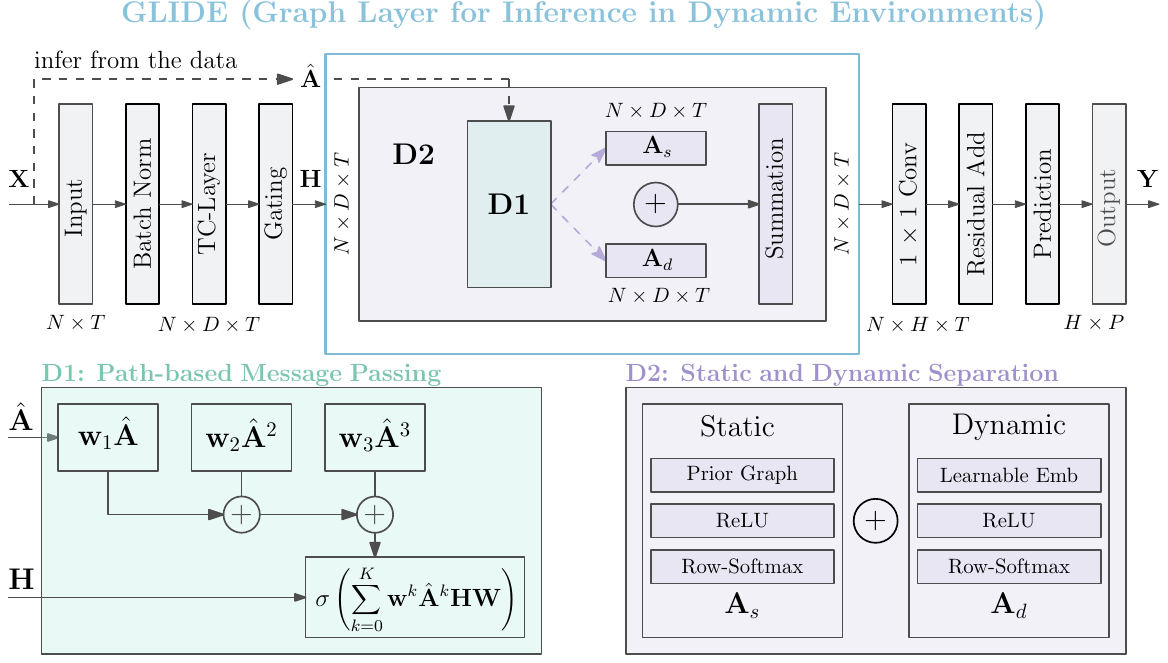}
    \caption{\modify{The framework of \proposed. It consists of K TC-layers, 1-GLIDE layer and one prediction layer. The inputs are first transformed by a normalization layer and then passed to the TC-Layer followed by the GLIDE. Each layer has residual connections and is skip-connected to the output layer.}}
    \label{fig:overview}
\end{figure}
\begin{figure}[t]
    \centering
    \includegraphics[width=0.7\linewidth]{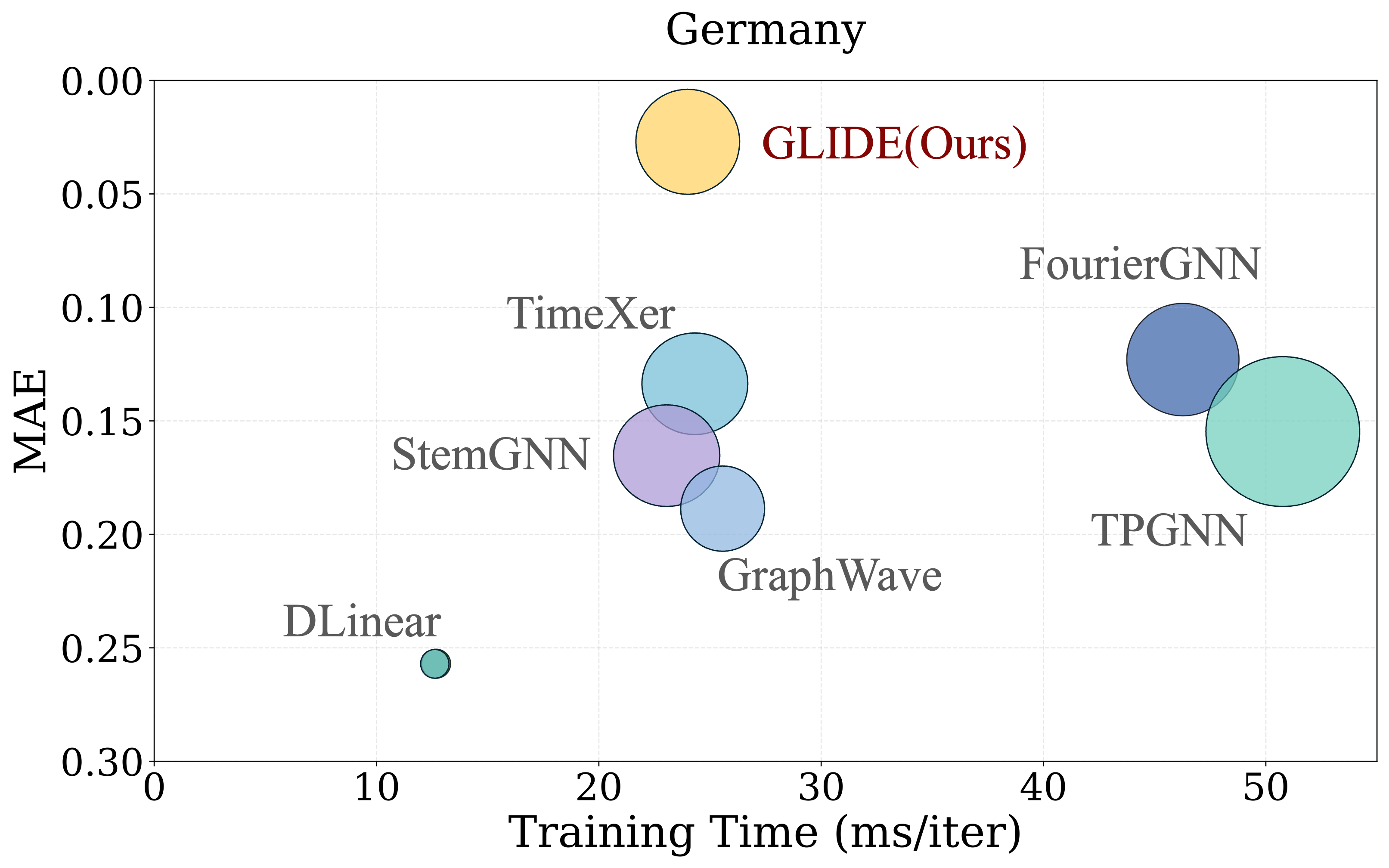}
    \caption{
    Model complexity comparison on the Germany dataset.
    Bubble areas represent GPU memory consumption during training.
    Training time and memory measurements are reported in \Cref{app:complexity},
    while MAE values are taken from \Cref{tab:germany_france}.
    }
    \label{fig:complexity}
\end{figure}
\subsection{\proposed: Putting everything together}\label{sec:glide}
We now introduce \proposed, which consists of three key components: \textbf{(C1)} a Temporal Convolution Layer, \textbf{(C2)} a \proposed Layer and \textbf{(C3)} a Prediction Layer.  We provide a detailed pipeline in \Cref{fig:overview}.
\subsubsection{C1: Temporal Convolution Layer (TC-Layer).} The TC-Layer is designed to capture a node’s piecewise-smooth temporal dynamics, combining gradual trends with abrupt changes. {We use dilated causal convolution networks, whose receptive field grows exponentially with the layer depth and dilation factor \cite{yu2016multiscalecontextaggregationdilated}.} Formally, given a 1D input sequence $\mathbf{x} \in \mathbb{R}^{T}$ and a filter $f \in \mathbb{R}^{K}$, the dilated causal convolution of $\mathbf{x}$ with $f$ at step $t$ is defined as
\begin{equation}\label{eq:dilated-conv}
(x *_d f)(t) = \sum\nolimits_{s=0}^{K-1} f(s)\,x(t - d \cdot s),
\end{equation}
where $d$ is the dilation factor that controls the spacing between sampled inputs. Stacking layers with exponentially increasing dilation factors yields an exponentially growing receptive field. The output of the TC-Layer passes through the Gating Mechanism and is then used as the feature $\mathbf{H}_i$ for topological feature aggregation in \proposed layer. 
\subsubsection{C2: \proposed Layer.} The aggregation layer updates node embeddings across $L$ layers using an adjacency decomposed into two complementary parts: (i) static $\hat{\mathbf{A}}_{\text{s}}$ for long-term topology and (ii) dynamic $\hat{\mathbf{A}}_{\text{d}}$ for transient dependencies based on the estimated graph from D1. Formally, the update rule is given by
\begin{equation}
\mathbf{H} \;=
\sigma \!
\Big( 
\underbrace{\sum\nolimits_{k=0}^K  \mathbf{w}_s^k \mathbf{A}^k_{\text{s}} \mathbf{W}_{\text{s}}}_{\text{static}} 
\Big)\mathbf{H}
+ 
\Big(
\underbrace{\sum\nolimits_{k=0}^K \mathbf{w}_d^k \mathbf{A}^k_{\text{d}}}_{\text{dynamic}}  
\mathbf{W}_{\text{d}}\Big)
\mathbf{H},
\label{eq:poly-layer}
\end{equation}
where $\mathbf{w}_k$ are polynomial coefficients, $\sigma(\cdot)$ is a nonlinearity and $\mathbf{W}$ denotes learnable parameters at layer $k$. The two components are combined additively. $\mathbf{A}_{\text{d}}$ is computed as an un-normalized average over dynamic embeddings and the polynomial expansion captures higher-order interactions. Separating static and dynamic correlations makes the model robust to dynamic correlation structures while adapting flexibly to static topology fluctuations.  
\subsubsection{C3: Prediction Layer.} The final node representations $\mathbf{H}^{(\text{f})} \in \mathbb{R}^{N \times F}$ from \proposed layer are mapped to future outputs via a 1D convolutional projection {\small $\hat{\mathbf{Y}} \;=\; \mathbf{H}^{(\text{f})}\mathbf{W}_c $} where $\mathbf{W}_c \in \mathbb{R}^{F \times P}$ is a learnable weight. Here, {\small $\hat{\mathbf{Y}} \in \mathbb{R}^{N \times P}$} represents the predicted sequences for all nodes. 
\begin{table*}[t]
\centering
\caption{\textsc{Synthetic}: Mean $\pm$ std MAE/RMSE (M/R) for Static(Syn-S) and Dynamic(Syn-D) categories with varying horizons. \proposed achieves the best MAE/RMSE across all horizons in both static and dynamic synthetic regimes.}
\label{table:synthetic_eval}
\footnotesize
\renewcommand{\arraystretch}{0.85}
\begin{tabular}{ll ccc ccc c}
\toprule
\multicolumn{2}{c}{}    
& \multicolumn{3}{c}{\textbf{Syn-S} (TCV=$\TCVEasy$)}    
& \multicolumn{3}{c}{\textbf{Syn-D} (TCV=$\TCVHard$)}  
& \textbf{Rk} \\
\cmidrule(lr){3-5}\cmidrule(lr){6-8}
Methods & Me. & 3 & 6 & 12 & 3 & 6 & 12 & \\
\midrule
\multirow{2}{*}{Informer}
  & M  & .423{\tiny$\pm$.019} & .567{\tiny$\pm$.023} & .672{\tiny$\pm$.029}
         & .613{\tiny$\pm$.031} & .772{\tiny$\pm$.036} & .897{\tiny$\pm$.063} & \multirow{2}{*}{8}\\
  & R & .552{\tiny$\pm$.020} & .726{\tiny$\pm$.025} & .878{\tiny$\pm$.026}
         & .764{\tiny$\pm$.027} & .943{\tiny$\pm$.034} & 1.045{\tiny$\pm$.047} & \\
\multirow{2}{*}{Autoformer}
  & M  & .402{\tiny$\pm$.015} & .553{\tiny$\pm$.019} & .650{\tiny$\pm$.023} 
         & .592{\tiny$\pm$.025} & .745{\tiny$\pm$.029} & .890{\tiny$\pm$.052} & \multirow{2}{*}{7}\\
  & R & .543{\tiny$\pm$.016} & .736{\tiny$\pm$.021} & .902{\tiny$\pm$.022}
         & .742{\tiny$\pm$.022} & .948{\tiny$\pm$.028} & 1.085{\tiny$\pm$.041} & \\
\multirow{2}{*}{Reformer}
  & M  & .628{\tiny$\pm$.028} & .684{\tiny$\pm$.028} & .850{\tiny$\pm$.036}
         & .621{\tiny$\pm$.032} & .689{\tiny$\pm$.027} & .722{\tiny$\pm$.051} & \multirow{2}{*}{6}\\
  & R & .903{\tiny$\pm$.032} & .958{\tiny$\pm$.033} & 1.038{\tiny$\pm$.031}
         & .927{\tiny$\pm$.033} & .979{\tiny$\pm$.035} & .972{\tiny$\pm$.044} & \\
\cmidrule(lr){1-2}\cmidrule(lr){3-5}\cmidrule(lr){6-8}
\multirow{2}{*}{FourGNN}
   & M  & .447{\tiny$\pm$.020} & .582{\tiny$\pm$.024} & .653{\tiny$\pm$.028}
          & .609{\tiny$\pm$.031} & .759{\tiny$\pm$.036} & .948{\tiny$\pm$.067} & \multirow{2}{*}{4}\\
   & R & .588{\tiny$\pm$.021} & .764{\tiny$\pm$.026} & .843{\tiny$\pm$.025}
          & .756{\tiny$\pm$.027} & .935{\tiny$\pm$.033} & 1.157{\tiny$\pm$.053} & \\
\multirow{2}{*}{StemGNN}
  & M  & .421{\tiny$\pm$.019} & .553{\tiny$\pm$.023} & .576{\tiny$\pm$.025}
         & .577{\tiny$\pm$.029} & .698{\tiny$\pm$.033} & .782{\tiny$\pm$.055} & \multirow{2}{*}{3}\\
  & R & .561{\tiny$\pm$.020} & .733{\tiny$\pm$.025} & .807{\tiny$\pm$.024}
         & .713{\tiny$\pm$.025} & .884{\tiny$\pm$.031} & .951{\tiny$\pm$.043} & \\
\cmidrule(lr){1-2}\cmidrule(lr){3-5}\cmidrule(lr){6-8}
\multirow{2}{*}{TPGNN}
   & M  & \cellcolor{RoyalBlue!10}\fixme{.251}{\tiny$\pm$.011} & \cellcolor{RoyalBlue!10}\fixme{.302}{\tiny$\pm$.013} & \fixme{.456}{\tiny$\pm$.020}
          & \fixme{.431}{\tiny$\pm$.022} & \fixme{.521}{\tiny$\pm$.025} & \fixme{.543}{\tiny$\pm$.038} & \multirow{2}{*}{2}\\
   & R & \cellcolor{RoyalBlue!10} .426{\tiny$\pm$.015} & \cellcolor{RoyalBlue!10} .459{\tiny$\pm$.016} & .589{\tiny$\pm$.018}
          & \cellcolor{RoyalBlue!10} .570{\tiny$\pm$.020} & \cellcolor{RoyalBlue!10} .580{\tiny$\pm$.021} & .643{\tiny$\pm$.029} & \\
\multirow{2}{*}{DCRNN}
   & M  & \fixme{.252}{\tiny$\pm$.011} & \fixme{.305}{\tiny$\pm$.013} & \cellcolor{RoyalBlue!10}\fixme{.358}{\tiny$\pm$.014}
          & \cellcolor{RoyalBlue!10}\fixme{.419}{\tiny$\pm$.016} & \cellcolor{RoyalBlue!10}\fixme{.472}{\tiny$\pm$.018} & \cellcolor{RoyalBlue!10}\fixme{.509}{\tiny$\pm$.022} & \multirow{2}{*}{2}\\
   & R & .434{\tiny$\pm$.015} & .469{\tiny$\pm$.016} & \cellcolor{RoyalBlue!10} .500{\tiny$\pm$.017}
          & .571{\tiny$\pm$.018} & .583{\tiny$\pm$.019} & \cellcolor{RoyalBlue!10} .612{\tiny$\pm$.023} & \\
\cmidrule(lr){1-2}\cmidrule(lr){3-5}\cmidrule(lr){6-8}
\multirow{2}{*}{\textbf{Ours}}
  & M  & \cellcolor{RoyalBlue!45} \textbf{.246}{\tiny$\pm$.008} & \cellcolor{RoyalBlue!45} \textbf{.297}{\tiny$\pm$.009} & \cellcolor{RoyalBlue!45} \textbf{.350}{\tiny$\pm$.011}
         & \cellcolor{RoyalBlue!45} \textbf{.415}{\tiny$\pm$.012} & \cellcolor{RoyalBlue!45} \textbf{.461}{\tiny$\pm$.014} & \cellcolor{RoyalBlue!45} \textbf{.496}{\tiny$\pm$.018} & \multirow{2}{*}{1}\\
  & R & \cellcolor{RoyalBlue!45} \textbf{.423}{\tiny$\pm$.010} & \cellcolor{RoyalBlue!45} \textbf{.458}{\tiny$\pm$.011} & \cellcolor{RoyalBlue!45} \textbf{.494}{\tiny$\pm$.012}
         & \cellcolor{RoyalBlue!45} \textbf{.566}{\tiny$\pm$.014} & \cellcolor{RoyalBlue!45} \textbf{.572}{\tiny$\pm$.015} & \cellcolor{RoyalBlue!45} \textbf{.602}{\tiny$\pm$.019} & \\
\bottomrule
\end{tabular}
\end{table*}

\subsection{Empirical Evaluation} \label{sec:eval}
In our empirical analysis, we aim to answer the following research questions: \textbf{(RQ1)}~In a simple synthetic setting where the data generation and time-varying topology dynamics are known, does \proposed{} improve performance? \textbf{(RQ2)}~On real-world time-series data with more difficult dynamic topology, to what extent does \proposed{} improve forecasting performance? \textbf{(RQ3)}~On commonly used real-world datasets with (near-)static topology dynamics, does \proposed{} preserve its strong performance? 
\subsubsection{Synthetic \&  Real Data.} {Following the synthetic data generation procedure described in \Cref{sec:motivating example}}, we consider two topology-change regimes: a dynamic configuration (TCV = \TCVDynamic) and a static configuration (TCV = \TCVEasy). We evaluate on widely used benchmarks, \textit{Electricity} and \textit{Solar} as in \cite{liu2022multivariate}. To further assess performance under high dynamics, we also include two recent energy datasets from \cite{shao2025data}, \emph{Germany} and \emph{France}. We also compare our method on two commonly used datasets, \textit{ETTh} and \textit{Exchange Rate} \cite{liu2022multivariate}. 
\subsubsection{Preprocessing.} We follow the preprocessing convention of \cite{liu2022multivariate,yi2023fouriergnn}: each variable is standardised per-channel using statistics computed on the training split only, and MAE/RMSE are reported in normalised space. Splits are 70/20/10 for the real-world benchmarks (Electricity, Solar, Germany, France, ETTh1, Exchange Rate) and 60/20/20 for the synthetic configurations. Full hyperparameter grids and dataset statistics are in \Cref{app:eval_details}.
\subsubsection{Baselines.} We compare against 18 representative forecasting models spanning six major methodological families. \textit{Spatial GNNs}: DCRNN \cite{li2018diffusionconvolutionalrecurrentneural}, GWaveNet \cite{wu2019graph}, MTGNN \cite{wu2020connecting}, and TPGNN \cite{liu2022multivariate}; \textit{Spectral GNNs}: StemGNN \cite{cao2021spectraltemporalgraphneural} and FourGNN \cite{yi2023fouriergnn}; \textit{Transformers}: Informer \cite{zhou2020informer}, Autoformer \cite{wu2021autoformer}, FEDformer \cite{zhou2022fedformer}, and Reformer \cite{Kitaev2020ReformerTE}; \textit{Sequence Models}: TCN \cite{Bai2018AnEE}, LSTNet \cite{lai2018modelinglongshorttermtemporal}, SFM \cite{Zhang2017StockPP}; \textit{Decomposition-based Models}: TimeMixer \cite{wang2024timemixer},  TimeXer \cite{timeXer} and PatchTST \cite{patchtst}; and the classical \textit{VAR} model and DLinear \cite{Zeng2022AreTE}. To ensure a comprehensive comparison while maintaining readability, all experiments are conducted against the same pool of 18 baselines. {The main paper reports only the strongest representative competitors due to space limitations: \Cref{table:synthetic_eval} includes the seven best-performing models, \Cref{tab:germany_france} reports 11 representative methods covering all model families, and \Cref{tab:results_classic} presents the six strongest competitors. Complete results for all 18 baselines are provided in App.\,\Cref{tab:app_results_classic}.}
\subsubsection{Hyperparameter Tuning \& Metrics.} We report Mean Absolute Error (MAE) and Root Mean Squared Error (RMSE) at horizons \(h=3,6,12\) over five random seeds for all benchmarks (\Cref{tab:synthetic-results,tab:results_classic,tab:germany_france}). 
\subsection{RQ1: Evaluation on Synthetic Benchmarks}
\label{subsec:synthetic}
{We evaluate \proposed on synthetic data spanning low-to-high topology dynamics (TCV) and compare it against seven models from three dominant categories: \emph{Spatial GNNs}, \emph{Spectral GNNs} and \emph{Transformers}.} 
Table~\ref{table:synthetic_eval} reports the mean MAE/RMSE of three prediction horizons ($h=3,6,12$). \proposed consistently outperforms all competing baselines, attaining the lowest RMSE/MAE in all six columns. This dominance persists across various topology regimes and over short to mid horizons. As the graph topology complexity (measured by $\metric$) increases, all methods exhibit performance degradation; however, \proposed maintains a significant lead. Specifically, \proposed achieves up to a \fixme{23.25\%} performance improvement over the strongest baselines \fixme{(TPGNN, h=12)}, with the most pronounced gains observed in MAE. Overall, the proposed method shows improvements across various degrees of topology dynamics for short-term prediction tasks.
\subsection{RQ2: Comprehensive Evaluation under Dynamic Topologies}
 We further validate the effectiveness of our proposed design on real-world datasets and we compare \proposed against \textbf{14 strong baselines} in \Cref{tab:germany_france}.
\begin{table*}[t]
\centering
\caption{\textsc{Real-Static-Dynamic}: Mean $\pm$ std MAE/RMSE ($h=12$). The top three results are highlighted. All standard deviations are small, $<0.009$. \proposed consistently ranks first across both low- and high-TCV real-world datasets.}
\label{tab:germany_france} 
\footnotesize 
\setlength{\tabcolsep}{3.5pt} 
\renewcommand{\arraystretch}{0.85} 
\begin{tabular}{l *{8}{c} c} 
\toprule
& \multicolumn{2}{c}{\textbf{Electricity}} 
  & \multicolumn{2}{c}{\textbf{Solar}} 
  & \multicolumn{2}{c}{\textbf{Germany}} 
  & \multicolumn{2}{c}{\textbf{France}}
  & \textbf{Rk} \\
& \multicolumn{2}{c}{\metric = \TCVElectricity} 
  & \multicolumn{2}{c}{\metric = \TCVSolar} 
  & \multicolumn{2}{c}{\metric = \TCVGermany} 
  & \multicolumn{2}{c}{\metric = \TCVFrance} 
  &  \\
\cmidrule(lr){2-3}\cmidrule(lr){4-5}\cmidrule(lr){6-7}\cmidrule(lr){8-9}
\textbf{Method} & MAE & RMSE & MAE & RMSE & MAE & RMSE & MAE & RMSE & \\
\midrule
TPGNN      & \cellcolor{RoyalBlue!10}0.055 & \cellcolor{RoyalBlue!10}0.080 & \cellcolor{RoyalBlue!10}0.123 & 0.214 & 0.099 & 0.173 & 0.089 & 0.158 & 3 \\
GWaveNet   & 0.094 & 0.140 & 0.183 & 0.238 & \cellcolor{RoyalBlue!30}\fixme{0.013} & \cellcolor{RoyalBlue!30}\fixme{0.028} & \cellcolor{RoyalBlue!30}0.012 & \cellcolor{RoyalBlue!10}0.025 & 6 \\
MTGNN      & 0.077 & 0.113 & 0.151 & 0.207 & \cellcolor{RoyalBlue!10}0.016 & \cellcolor{RoyalBlue!10}0.034 & \cellcolor{RoyalBlue!30}0.012 & \cellcolor{RoyalBlue!30}0.023 & 4 \\
\midrule
FourierGNN & \cellcolor{RoyalBlue!30}\fixme{0.051} & \cellcolor{RoyalBlue!30}0.077 & \cellcolor{RoyalBlue!30}0.120 & \cellcolor{RoyalBlue!30}0.162 & 0.110 & 0.186 & 0.096 & 0.164 & 2 \\
StemGNN    & 0.070 & 0.101 & 0.176 & 0.222 & 0.179 & 0.285 & \cellcolor{RoyalBlue!10}0.148 & 0.206 & 5 \\
\midrule
TimeMixer  & 0.091 & 0.147 & 0.166 & 0.211 & 0.181 & 0.314 & 0.167 & 0.279 & 13 \\
PatchTST   & 0.212 & 0.309 & 0.188 & 0.305 & 0.153 & 0.332 & 0.219 & 0.408 & 15 \\
\midrule
Autoformer & 0.056 & 0.083 & 0.150 & 0.193 & 0.204 & 0.376 & 0.165 & 0.263 & 8 \\
FEDformer  & \cellcolor{RoyalBlue!10}0.055 & 0.081 & 0.139 & \cellcolor{RoyalBlue!10}0.182 & 0.271 & 0.396 & 0.220 & 0.291 & 11 \\ 
Informer   & 0.070 & 0.119 & 0.151 & 0.199 & 0.283 & 0.324 & 0.137 & 0.217 & 7 \\
\midrule
TCN        & 0.057 & 0.083 & 0.176 & 0.222 & 0.187 & 0.287 & 0.172 & 0.260 & 9 \\
LSTNet     & 0.075 & 0.138 & 0.148 & 0.200 & 0.193 & 0.346 & 0.177 & 0.263 & 12 \\
\midrule
DLinear    & 0.058 & 0.092 & 0.257 & 0.313 & 0.266 & 0.368 & 0.196 & 0.259 & 16 \\
VAR        & 0.096 & 0.155 & 0.175 & 0.222 & 0.243 & 0.381 & 0.177 & 0.260 & 17 \\
\midrule
\textbf{\proposed} & \cellcolor{RoyalBlue!45}\fixme{\textbf{0.010}} & \cellcolor{RoyalBlue!45}\textbf{0.051} & \cellcolor{RoyalBlue!45}\textbf{0.024} & \cellcolor{RoyalBlue!45}\textbf{0.061} & \cellcolor{RoyalBlue!45}\fixme{\textbf{0.005}} & \cellcolor{RoyalBlue!45}\fixme{\textbf{0.020}} & \cellcolor{RoyalBlue!45}\textbf{0.010} & \cellcolor{RoyalBlue!45}\textbf{0.020} & \textbf{1} \\
\bottomrule
\end{tabular}
\end{table*}
Table~\ref{tab:germany_france} reports the mean MAE/RMSE across four datasets on {14 baseline models grouped into six architectural families}. \proposed consistently achieves state-of-the-art performance in every category ($\text{Rank}=1$), yielding a lower generalization error in terms of both MAE and RMSE. This dominance demonstrates its robustness across varying graph topology dynamics. One of the most significant margins is observed on the Electricity dataset, where \proposed reduces MAE from \fixme{0.051} to \fixme{0.010}--an absolute reduction of 0.041 and a \fixme{80\%} relative gain over the runner-up FourierGNN. On the Germany dataset, \proposed outperforms GWaveNet by \fixme{61.54\%} (MAE) and \fixme{28.57\%} (RMSE). Similarly, in France, \proposed shows a significant improvement over MTGNN.  
\begin{table*}[t]
    \centering
    \setlength{\tabcolsep}{2pt} 
    \renewcommand{\arraystretch}{0.85} 
    \caption{\textsc{Real-Benchmark}: Mean $\pm$ std RMSE/MAE (R/M) ($h=3,6,12$). The top three models for each metric are highlighted (darker shade = better rank). \proposed maintains strong performance on common benchmarks.}
    \label{tab:results_classic}
    \begin{tabular}{ll *{6}{>{\centering\arraybackslash}p{1.3cm}} c} 
      \toprule
      \multicolumn{2}{c}{} 
        & \multicolumn{3}{c}{\textbf{Exchange Rate} (\metric=\TCVExchange)}
        & \multicolumn{3}{c}{\textbf{ETTh1} (\metric=\fixme{\TCVETThOne})}
        & \multirow{2}{*}{\textbf{Rk}} \\
      \cmidrule(lr){3-5}\cmidrule(lr){6-8}
      Methods & Me. & 3 & 6 & 12 & 3 & 6 & 12 & \\
      \midrule
      \multirow{2}{*}{StemGNN}
      & R & .063{\tiny$\pm$.009} & .189{\tiny$\pm$.015} & .123{\tiny$\pm$.014} 
             & .496{\tiny$\pm$.001} & .573{\tiny$\pm$.009} & .660{\tiny$\pm$.014} & \multirow{2}{*}{7} \\
      & M  & .190{\tiny$\pm$.012} & .290{\tiny$\pm$.016} & .277{\tiny$\pm$.017}
             & .349{\tiny$\pm$.001} & .408{\tiny$\pm$.009} & .476{\tiny$\pm$.011} & \\
      \multirow{2}{*}{TimeMixer}
      & R & .217{\tiny$\pm$.014} & .277{\tiny$\pm$.018} & .293{\tiny$\pm$.020}
             & \cellcolor{RoyalBlue!10}.379{\tiny$\pm$.000} & \cellcolor{RoyalBlue!10}.459{\tiny$\pm$.003} & \cellcolor{RoyalBlue!10}.515{\tiny$\pm$.003} & \multirow{2}{*}{6} \\
      & M  & .631{\tiny$\pm$.029} & .775{\tiny$\pm$.036} & .875{\tiny$\pm$.042}
             & \cellcolor{RoyalBlue!10}.243{\tiny$\pm$.001} & \cellcolor{RoyalBlue!10}.290{\tiny$\pm$.001} & \cellcolor{RoyalBlue!10}.328{\tiny$\pm$.001} & \\
      \multirow{2}{*}{FourierGNN}
      & R & .221{\tiny$\pm$.012} & .268{\tiny$\pm$.015} & .292{\tiny$\pm$.017}
             & .508{\tiny$\pm$.004} & .558{\tiny$\pm$.002} & .611{\tiny$\pm$.007} & \multirow{2}{*}{5} \\
      & M  & \cellcolor{RoyalBlue!20}\fixme{.016}{\tiny$\pm$.003} & \cellcolor{RoyalBlue!20}\fixme{.033}{\tiny$\pm$.005} & \cellcolor{RoyalBlue!20}\fixme{.049}{\tiny$\pm$.006}
             & .351{\tiny$\pm$.004} & .385{\tiny$\pm$.002} & .425{\tiny$\pm$.007} & \\
      \cmidrule(lr){1-2}\cmidrule(lr){3-5}\cmidrule(lr){6-8}
      \multirow{2}{*}{MTGNN} 
      & R & .023{\tiny$\pm$.003} & .045{\tiny$\pm$.004} & .090{\tiny$\pm$.006} 
             & .457{\tiny$\pm$.001} & .548{\tiny$\pm$.009} & .640{\tiny$\pm$.012} & \multirow{2}{*}{4} \\
      & M  & .044{\tiny$\pm$.004} & .087{\tiny$\pm$.006} & .131{\tiny$\pm$.009} 
             & .310{\tiny$\pm$.002} & .372{\tiny$\pm$.007} & .440{\tiny$\pm$.003} & \\
      \multirow{2}{*}{TPGNN}
      & R & \cellcolor{RoyalBlue!20}.009{\tiny$\pm$.002} & \cellcolor{RoyalBlue!20}.018{\tiny$\pm$.003} & \cellcolor{RoyalBlue!10}.035{\tiny$\pm$.004}
             & .445{\tiny$\pm$.006} & .540{\tiny$\pm$.019} & .627{\tiny$\pm$.012} & \multirow{2}{*}{3} \\
      & M  & \cellcolor{RoyalBlue!10}.020{\tiny$\pm$.003} & \cellcolor{RoyalBlue!10}.039{\tiny$\pm$.004} & \cellcolor{RoyalBlue!10}.078{\tiny$\pm$.006}
             & .277{\tiny$\pm$.004} & .345{\tiny$\pm$.014} & .410{\tiny$\pm$.008} & \\
      \multirow{2}{*}{GWaveNet}
      & R & \cellcolor{RoyalBlue!10}\fixme{.013}{\tiny$\pm$.003} & \cellcolor{RoyalBlue!10}\fixme{.033}{\tiny$\pm$.004} & \cellcolor{RoyalBlue!20}\fixme{.034}{\tiny$\pm$.005}
             & \cellcolor{RoyalBlue!20}\fixme{.264}{\tiny$\pm$.004} & \cellcolor{RoyalBlue!20}\fixme{.319}{\tiny$\pm$.001} & \cellcolor{RoyalBlue!20}\fixme{.369}{\tiny$\pm$.003} & \multirow{2}{*}{2} \\
      & M  & \fixme{.078}{\tiny$\pm$.005} & \fixme{.139}{\tiny$\pm$.009} & \fixme{.124}{\tiny$\pm$.010}
             & \cellcolor{RoyalBlue!20}\fixme{.133}{\tiny$\pm$.002} & \cellcolor{RoyalBlue!20}\fixme{.164}{\tiny$\pm$.002} & \cellcolor{RoyalBlue!20}\fixme{.193}{\tiny$\pm$.002} & \\
      \cmidrule(lr){1-2}\cmidrule(lr){3-5}\cmidrule(lr){6-8}
      \multirow{2}{*}{\textbf{Ours}}
      & R & \cellcolor{RoyalBlue!30}\textbf{.007}{\tiny$\pm$.001} 
             & \cellcolor{RoyalBlue!30}\textbf{.009}{\tiny$\pm$.001} 
             & \cellcolor{RoyalBlue!30}\textbf{.012}{\tiny$\pm$.002} 
             & \cellcolor{RoyalBlue!30}\textbf{.171}{\tiny$\pm$.009} 
             & \cellcolor{RoyalBlue!30}\textbf{.184}{\tiny$\pm$.009} 
             & \cellcolor{RoyalBlue!30}\textbf{.184}{\tiny$\pm$.010}
             & \multirow{2}{*}{\textbf{1}} \\
      & M  & \cellcolor{RoyalBlue!30}\textbf{.005}{\tiny$\pm$.001} 
             & \cellcolor{RoyalBlue!30}\textbf{.006}{\tiny$\pm$.001} 
             & \cellcolor{RoyalBlue!30}\textbf{.007}{\tiny$\pm$.001}
             & \cellcolor{RoyalBlue!30}\textbf{.135}{\tiny$\pm$.007} 
             & \cellcolor{RoyalBlue!30}\textbf{.135}{\tiny$\pm$.008} 
             & \cellcolor{RoyalBlue!30}\textbf{.154}{\tiny$\pm$.009} & \\
      \bottomrule
    \end{tabular}
\end{table*}
\subsection{RQ3: Evaluation on Common Benchmarks}
\label{sec: classic eval}
 We then evaluate the performance of \proposed against {six} models drawn from three categories: \emph{SpatialGNNs}, \emph{SpectralGNNs} and \emph{SOTA: {TimeMixer}}.

Table~\ref{tab:results_classic} shows that \proposed consistently outperforms six strong baselines across both benchmarks and most forecasting horizons. It achieves the best overall ranking and attains the lowest RMSE in all evaluated settings. For MAE, \proposed yields the best performance in all but one case (ETTh1 at $h=3$), where the difference to the strongest baseline is marginal. Compared with \emph{GWaveNet}, the strongest competing method on ETTh1, \proposed reduces RMSE by up to \fixme{$50.1\%$} and MAE by up to \fixme{$20.2\%$}.
On the Exchange Rate benchmark, the improvements are more pronounced: \proposed achieves up to \fixme{$64.7\%$} RMSE reduction and up to \fixme{$\Prealmax\%$} MAE reduction over the second-best model at each horizon.
Averaged across the three horizons, \proposed reduces RMSE by \fixme{$45.6\%$} and MAE by \fixme{$78.8\%$}, confirming that the improvement is broadly sustained rather than driven by a single horizon.
\fixme{Across both Exchange Rate and ETTh1, mean RMSE reductions remain comparable (\fixme{$45.6\%$} vs \fixme{$42.5\%$}) despite a $100\times$ difference in TCV, indicating that the gains generalize across volatility regimes.}
\begin{figure*}[t]
    \centering
    \includegraphics[width=\textwidth]{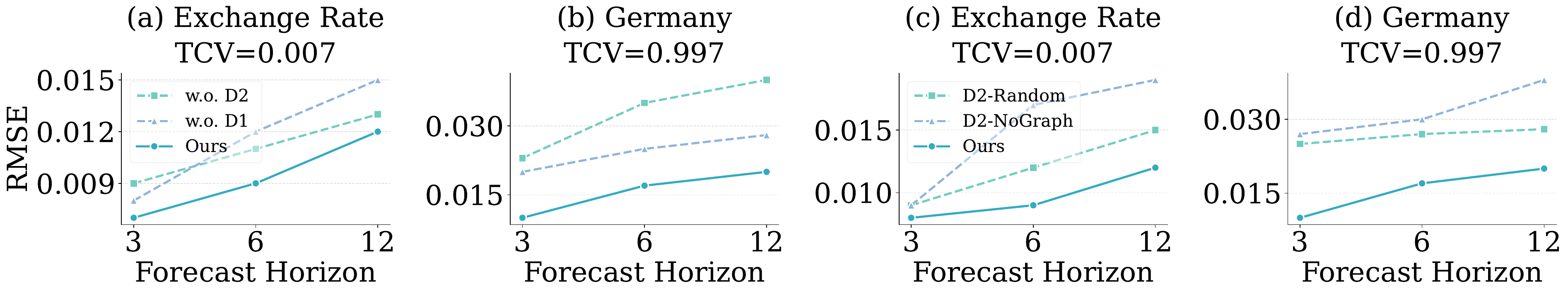}
    \caption{Ablation study of \proposed. (a--b) Comparison with base ablations (w.o. $D_1$ and $D_2$). (c--d) Sensitivity analysis of $D_2$ removing dynamic topology ($D_2$-NoGraph) and a random matrix baseline ($D_2$-Random). Both $D_1$ and $D_2$ contribute to \proposed’s gains, with $D_2$ most critical under high-TCV dynamics and learned dynamic graphs outperforming random or removed dynamic topology.}
    \label{fig:ablation_comparison}
\end{figure*}
\subsection{Ablation Studies}
\subsubsection{Impact of two Designs ($D_1, D_2$):} Figure~\ref{fig:ablation_comparison} compares \proposed with two ablated variants. Figs.~\ref{fig:ablation_comparison}(a)-(b) show that removing $D_1$ yields substantial performance degradation on both static and dynamic datasets, validating its efficacy in modeling low and high-range topological variance. Figure~\ref{fig:ablation_comparison}(b) indicates that under the high-TCV setting (Germany), $D_2$ emerges as the primary performance driver; its removal results in nearly twofold the error compared to other variants. 
\subsubsection{Impact of Dynamic Graph Designs ($D_2$):} We also compare different variants of dynamic graphs in Figs.~\ref{fig:ablation_comparison}(c)-(d), our full approach achieves the best MAE and RMSE across all horizons. Specifically: (1) replacing the dynamic graph $\mathbf{A}_d$ with a random graph ($D_2$-Random) yields slightly higher errors at longer horizons, while (2) removing the dynamic graph ($D_2$-NoGraph) produces the weakest results with substantially larger RMSE values.
\subsection{Limitations and Future Work}\label{sec:limitations}

Our analysis in \Cref{thm:d2-gnn} assumes mutually independent temporal perturbations for analytical tractability. This assumption may be violated in domains such as energy and finance, where temporal dependencies are prevalent. However, our experiments indicate that structural drift is the primary challenge under high TCV conditions, and D2 remains effective even on Germany and France (TCV $>0.96$), which exhibit substantial temporal dependence. Furthermore, TCV currently relies on Pearson correlation and therefore captures only linear relationships. Future work includes extending TCV to nonlinear similarity measures, such as Dynamic Time Warping and kernel-based methods, and investigating their ability to characterize structural dynamics.
\section{Related Work}
\label{sec:relatedwork}
\subsubsection{GNN in Time Series.} SpatialGNNs model multivariate time series as graphs whose nodes are series and whose edges encode learned relational structure \cite{wu2020connecting,yang2025benchmarkinggraphrepresentationsgraph,shao2025data}. \cite{li2018diffusionconvolutionalrecurrentneural} pioneered diffusion convolution for spatial dependencies, and \cite{wu2019graph} introduced adaptive adjacency learning to uncover hidden correlations. Subsequent work strengthens correlation modeling with spectral kernels and pure graph formulations \cite{bai2020adaptive,cao2021spectraltemporalgraphneural,yi2023fouriergnn}, and polynomial graph encoders capture higher-order dependencies \cite{liu2022multivariate}.
\subsubsection{Graph Structure Learning.} Existing GSL methods for time series fall into three categories: \textit{latent-static} approaches (e.g., MTGNN) that learn time-invariant spatial priors via node embeddings; \textit{stochastic} models (e.g., GTS) that sample probabilistic topologies via Gumbel-Softmax; and \textit{spectral operators} (e.g., FourGNN) that exploit frequency-domain kernels for efficiency. \fixme{Closer to our setting, time-varying graphical-lasso estimators infer evolving precision matrices from sliding-window covariance \cite{10.1145/3097983.3098037,kolar2010estimating}, and dynamic GNN studies have begun to connect time-varying graph properties with model behavior, e.g., through dynamic homophily in node classification~\cite{ItoKW25}. Dynamic GNN architectures such as EvolveGCN \cite{pareja2020evolvegcn}, AGCRN \cite{bai2020adaptiveNIPS}, and {DGCRN \cite{li2022dgcrn}} update node representations or adjacency over time; however, these works either provide no quantitative diagnostic for topology drift or do not explicitly decouple persistent and transient structure.}
\section{Conclusion}
We characterized the representational power of GNNs for time series of static or dynamic topology. We highlighted the limitations of current methods and proposed \proposed with theoretically grounded components. Together, our method and linear-correlation metric establish a solid foundation for dynamic topology modeling. Experiments on time series forecasting show that \proposed is robust to topology variations and outperforms strong baselines.
\subsubsection*{Acknowledgements.}
This work was supported in part by the National Science Foundation under Grants No.~IIS-2212143 and IIS-2504090, and in part by the Federal Ministry of Research, Technology and Space (BMFTR), Germany, under award number 01IS23066. 
\subsubsection*{Disclosure of AI Assistance.} The authors employed AI Tool for grammatical refinement and LaTeX table optimization. 
\bibliographystyle{splncs04}
\bibliography{mybibliography}
\newpage
\appendix
\crefalias{section}{appendix} 

\onecolumn 
\section{Notation}
\label{sec:notation}

\begin{table}[H]
\centering
\caption{Summary of notation used throughout the paper.}
\label{tab:notation}
\begin{tabular}{ll}
\toprule
\textbf{Symbol} & \textbf{Description} \\
\midrule
$N$ & Number of variables (nodes in the graph) \\
$T$ & Length of the look-back window \\
$P$ & Forecast horizon (number of future steps to predict) \\
$B$ & Length of the sliding window for node features ($B \leq T$) \\
\midrule
$\mathbf{x}_t$ & Multivariate observation at time $t$, $\mathbf{x}_t \in \mathbb{R}^N$ \\
$x_t[i]$ & The $i^{\text{th}}$ variable (node feature) in $\mathbf{x}_t$ \\
$\mathbf{X}$ & Input look-back sequence, $\mathbf{X} \in \mathbb{R}^{N \times T}$ \\
$\tilde{\mathbf{X}}$ & Row-wise standardized node signal matrix, $\tilde{\mathbf{X}} \in \mathbb{R}^{N \times T}$ \\
$\mathbf{Y}$ & Target future value(s), $\mathbf{x}_{t+1}$ or $\{\mathbf{x}_{t+1}, \dots, \mathbf{x}_{t+P}\}$ \\
\midrule
$\mathcal{G}(t)$ & Temporal graph at time $t$, $(\mathcal{V}, \mathcal{E}_t, \mathbf{X}_t, \mathbf{A}_t)$ \\
$\mathcal{V}$ & Set of vertices (variables), $|\mathcal{V}| = N$ \\
$\mathcal{E}_t$ & Set of edges representing inter-variable dependencies at time $t$ \\
$\mathbf{X}_t$ & Node feature matrix for the graph at time $t$, $\mathbf{X}_t \in \mathbb{R}^{N \times B}$ \\
$\mathbf{A}_t$ & Sparsified weighted adjacency matrix, $\mathbf{A}_t \in \mathbb{R}^{N \times N}$ \\
\midrule
$\hat{\mathbf{A}}$ & Similarity-based adjacency matrix, $\hat{\mathbf{A}} = \frac{1}{T}\tilde{\mathbf{X}}\tilde{\mathbf{X}}^{\top}$ \\
$\hat{a}_{ij}$ & Pearson correlation coefficient between nodes $v_i$ and $v_j$ \\
\midrule
$\mathcal{N}^k(v_i)$ & Set of $k$ nodes most similar to $v_i$ ($k$-NN graph neighbors) \\
$R_{ij}^{(k)}$ & $k$-hop reachability scalar between nodes $i$ and $j$, $R_{ij}^{(k)} = [\hat{\mathbf{A}}^k]_{ij}$ \\
$R_{ij}^{(\text{w})}$ & Weighted polynomial reachability, $R_{ij}^{(\text{w})} = \big[\sum_{k=0}^K \mathbf{w}^k \hat{\mathbf{A}}^k\big]_{ij}$ \\
$K$ & Maximum polynomial order in $R_{ij}^{(\text{w})}$ \\
\midrule
$f(\cdot)$ & Prediction function mapping $\mathbf{X}$ (or $\mathcal{G}(t)$) to $\mathbf{Y}$ \\
\bottomrule

\end{tabular}
\end{table}
\section{\proposed: Details of Theorems}
\label{app:proof}
\begin{theorem}

Let $\{\mathbf{A}_t\}$ be independent random adjacency structures with:
\begin{equation}
    \mathbf{A}_t \sim \mathcal{N}(0, \Sigma(t))
\end{equation}
associated with a time index $t = 0, \frac{1}{n}, \frac{2}{n}, \dots, 1$. Each $\mathbf{A}_t$ corresponds to an undirected graph $\mathcal{G}(t)$. Under the assumption that the law $\mathcal{L}(\mathbf{A}_t)$ varies smoothly with $t$, our goal is to estimate the local graph sequence:
\begin{equation}
    \mathcal{G}(0), \mathcal{G}\left(\tfrac{1}{n}\right), \dots, \mathcal{G}(1)
\end{equation}
The structure of $\mathcal{G}(t)$ is determined by the zero pattern of the precision matrix $\Sigma(t)^{-1}$. This framework applies to the global similarity evolution of the form:
\begin{equation}
    \hat{\mathbf{A}}_0 \sim \mathcal{N}(0, \Sigma(0)), \qquad \hat{\mathbf{A}}_t = \hat{\mathbf{A}}_{t-1} + \mathbf{A}_t
\end{equation}
where $\mathbf{A}_t \sim \mathcal{N}(0, \Sigma(t))$.

In the i.i.d. has considered $\ell_1$-penalized maximum likelihood estimation over the space of positive definite matrices. Specifically, the estimator is defined as
\begin{equation}
\widehat{\Sigma}_n
=
\arg\min_{\Sigma \succ 0}
\left\{
\operatorname{tr}\!\left(\Sigma^{-1}\widehat{S}_n\right)
+ \log |\Sigma|
+ \lambda \lVert \Sigma^{-1} \rVert_1
\right\},
\label{eq:iid-glasso}
\end{equation}
where $\widehat{S}_n$ denotes the sample covariance matrix.

In the non-i.i.d.\ setting, our approach is to estimate $\Sigma(t)$ at time $t$ by
\begin{equation}
\widehat{\Sigma}_n(t)
=
\arg\min_{\Sigma \succ 0}
\left\{
\operatorname{tr}\!\left(\Sigma^{-1}\widehat{S}_n(t)\right)
+ \log |\Sigma|
+ \lambda \lVert \Sigma^{-1} \rVert_1
\right\},
\label{eq:tv-glasso}
\end{equation}
where
\begin{equation}
\widehat{S}_n(t)
=
\frac{\sum_s w_{st} Z_s Z_s^\top}{\sum_s w_{st}}
\label{eq:weighted-cov}
\end{equation}
is a weighted covariance matrix. The weights are given by
$w_{st}
=
K\!\left( \frac{|s - t|}{h_n} \right),$
where $K(\cdot)$ is a symmetric, nonnegative kernel function over time and $h_n$ is a bandwidth parameter. In other words, $\widehat{S}_n(t)$ is the kernel estimator of the covariance matrix at time $t$.

\end{theorem}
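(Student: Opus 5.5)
The plan is to reduce the time-varying problem to the i.i.d.\ graphical-lasso problem in \eqref{eq:iid-glasso}. The only input that problem needs is a sample covariance that concentrates entrywise around the true covariance. So the whole argument rests on showing that the kernel estimator $\widehat{S}_n(t)$ in \eqref{eq:weighted-cov} is a good entrywise (sup-norm) estimate of $\Sigma(t)$. This follows the route of Zhou, Lafferty and Wasserman for time-varying undirected graphs, which is the setting cited in \Cref{thm:d2-gnn} via \cite{kolar2010estimating}.

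\textbf{Step 1: bias--variance split.} I will write
\[
\widehat{S}_n(t)-\Sigma(t)=\big(\widehat{S}_n(t)-\mathbb{E}\widehat{S}_n(t)\big)+\big(\mathbb{E}\widehat{S}_n(t)-\Sigma(t)\big)
\]
and bound each part in $\|\cdot\|_\infty$.

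\textbf{Step 2: the bias term.} We have $\mathbb{E}\widehat{S}_n(t)=\sum_s w_{st}\Sigma(s)/\sum_s w_{st}$. Assume each entry $\sigma_{ij}(\cdot)$ has a bounded second derivative, which is my concrete reading of ``varies smoothly''. Taylor-expanding around $t$, the first-order term cancels because $K$ is symmetric. The remainder is $O(h_n^2)$, up to a boundary correction near $t\in\{0,1\}$ that I will handle with a one-sided kernel, or simply restrict to interior $t$.

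\textbf{Step 3: the variance term.} Each $Z_sZ_s^\top$ has entries that are products of Gaussians, hence sub-exponential. Independence across $s$ then gives a Bernstein-type bound for each weighted sum $\sum_s \tilde w_{st}\,(Z_{s,i}Z_{s,j}-\sigma_{ij}(s))$, with normalized weights $\tilde w_{st}$. The effective sample size is $1/\sum_s\tilde w_{st}^2\asymp nh_n$. A union bound over the $p^2$ entries yields, with high probability,
\[
\|\widehat{S}_n(t)-\Sigma(t)\|_\infty\lesssim \sqrt{\log p/(nh_n)}+h_n^2.
\]
Choosing $h_n\asymp n^{-1/5}$ gives a rate of order $n^{-2/5}\sqrt{\log p}$.

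\textbf{Step 4: plug into the graphical lasso.} Take $\lambda$ of the order of this deviation bound. I then invoke the deterministic analysis of the $\ell_1$-penalized likelihood in \eqref{eq:tv-glasso} (Rothman et al.; Ravikumar et al.). Under bounded eigenvalues of $\Sigma(t)$ and sparsity $s_t$ of $\Sigma(t)^{-1}$, this gives $\|\widehat\Sigma_n(t)^{-1}-\Sigma(t)^{-1}\|_F\lesssim\sqrt{(p+s_t)}\,\lambda$. Under the additional irreducibility/incoherence condition, a primal--dual witness argument gives recovery of the zero pattern. Recovering the zero pattern is exactly recovering $\mathcal{G}(t)$, provided the nonzero precision entries exceed a beta-min threshold.

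\textbf{The random-walk model.} For the evolution $\hat{\mathbf{A}}_t=\hat{\mathbf{A}}_{t-1}+\mathbf{A}_t$, the marginal law is still Gaussian, with covariance $\Sigma(0)+\sum_{r\le t}\Sigma(r)$. This covariance is again smooth in $t$, so Step 2 carries over. The caveat is that successive states are no longer independent. My fix is to apply the argument to the independent increments $\mathbf{A}_t$, whose laws are smooth by assumption, so Steps 3--4 apply verbatim. This is exactly why the paper's \Cref{sec:limitations} emphasizes the independence assumption.

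\textbf{Main obstacle.} I expect Step 3, together with its interface with Step 4, to be the hardest part. The concentration must hold uniformly over entries with only about $nh_n$ effective samples, and the resulting $\lambda$ must still dominate the bias while keeping the incoherence-based support recovery valid. I would first try to match $\lambda$ to the combined bias-plus-deviation bound and verify that the primal--dual witness tolerates a perturbation of that size. If that is too delicate, I would settle for the Frobenius-norm consistency statement alone.
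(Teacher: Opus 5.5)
Your proposal is correct and follows essentially the same Zhou--Lafferty--Wasserman-style route as the paper's proof (bias--variance split of $\hat a_{ij}-\sigma_{ij}(x_0)$, a Taylor bias lemma, a Chernoff/MGF bound $\exp(-Cnh\epsilon^2)$ for weighted Gaussian products in place of your Bernstein step, then the $\ell_1$-penalized log-determinant estimator); the main difference is that the paper estimates at the forecast point $x_0$ from past observations only, so its one-sided kernel leaves the first-order Taylor term uncancelled and gives bias $O(h)$, $h\asymp n^{-1/3}$ and sparsity $(N+s)=o(n^{2/3}/\log N)$, rather than your interior $O(h^2)$ with $h\asymp n^{-1/5}$ (a plain one-sided kernel at the boundary puts you back at the paper's rates). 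Your explicit incoherence and beta-min hypotheses for support recovery are more careful than the paper's direct sparsistency claim, and in the random-walk model you need to difference for smoothness as well as independence, since $\Sigma(0)+\sum_{r\le t}\Sigma(r)$ grows with the number of steps and is not smooth uniformly in $n$ on the rescaled grid (its slope there is of order $n$).
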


\begin{proof}
\label{sec:dev-bound}
In this section, we establish the probabilistic bounds governing the convergence of our similarity-based graph estimators. Our data consists of $n$ multivariate observations $\mathbf{x}_k \in \mathbb{R}^N$ sampled at $x = 0, 1/n, \dots, 1$. These observations correspond to the independent random vectors $\mathbf{x}_k \sim \mathcal{N}(0, \Sigma_k)$. We use the notion \textit{Local Similarity Graph approximation} $\mathbf{A}_k$ as the instantaneous graph structure at time index $k$. The \textit{Global Similarity Graph} $\hat{\mathbf{A}}$ is then reconstructed via a kernel-weighting scheme that prioritizes observations proximal to $x_0$. The motive is to generalize the usual inequalities and use moment generating functions to show that for the estimated covariance matrix with the constraint of window $t_0$ is close enough to true topology. 

\paragraph{Bounds for Kernel-Smoothed Similarity Graphs}
\label{sec:kernel-dev}

We derive large deviation inequalities for the empirical covariance matrix based on kernel regression. We utilize a symmetric, non-negative \fixme{kernel function $K$} with bounded support $[-1, 1]$. To estimate the Global Similarity Graph at $x_0$, we apply a weighting scheme $\ell_k(x_0)$ that assigns higher importance to Local Similarity approximations near the forecast horizon. Let $x_k = \frac{t_0 - k}{n}$. The kernel weights are defined as:
\begin{equation}
\label{eq::kernel-weight}
\ell_k(x_0) = \frac{2}{nh} K\left(\frac{x_k - x_0}{h}\right) \approx \frac{K\left(\frac{x_k - x_0}{h}\right)}{\sum_{k=1}^n K\left(\frac{x_k - x_0}{h}\right)}
\end{equation}
where $h$ is the bandwidth controlling the temporal smoothness of the Global Graph. The expected Global Similarity structure is denoted by $\Phi_1(i, j) = \mathbb{E}[\sum \ell_k \mathbf{x}_{ki}\mathbf{x}_{kj}]$. We decompose the estimation error into a bias term and a variance (deviation) term:
\begin{equation}
\label{eq::bd-decompose}
|\hat{a}_{ij} - \sigma_{ij}(x_0)| \leq |\hat{a}_{ij} - \Phi_1(i,j)| + |\Phi_1(i,j) - \sigma_{ij}(x_0)|
\end{equation}
where $\hat{a}_{ij}$ is the $ij$-th entry of the Global Similarity Graph $\hat{\mathbf{A}}$.

\begin{lemma}[Global Bias Bound]
\label{lemma::bias}
Suppose there exists a constant $C > 0$ such that the local precision structures vary smoothly, satisfying $\max_{i,j} \sup_x |\sigma''(x, i, j)| \leq C$. Then, for all $x \in [0, 1]$, the bias of the Global Similarity Graph $\hat{\mathbf{A}}$ relative to the true local state $\Sigma(x)$ is bounded by:
\begin{equation}
    \max_{i,j} \left| \mathbb{E}[\hat{a}_{ij}(x)] - \sigma_{ij}(x) \right| = O(h).
\end{equation}
\end{lemma}

\begin{proof}
Without loss of generality, let $x = x_0$ represent the current forecast point. The expected value of an entry in the Global Similarity Graph is given by $\mathbb{E}[\hat{a}_{ij}(x_0)] = \Phi_1(i,j)$. We approximate the weighted sum of local similarity approximations using a Riemann integral:

\begin{align*}
\Phi_1(i,j) &= \frac{1}{n} \sum_{k=1}^n \frac{2}{h} K\left(\frac{x_k - x_0}{h}\right) \sigma_{ij}(x_k) \\
&\approx \int_{x_n}^{x_0} \frac{2}{h} K\left(\frac{u - x_0}{h}\right) \sigma_{ij}(u) \, du \\
&= 2 \int_{-1/h}^{0} K(v) \sigma_{ij}(x_0 + hv) \, dv.
\end{align*}

We apply Taylor's Formula to the local similarity structure $\sigma_{ij}(x_0 + hv)$, expanding it around the state at $x_0$:
\begin{equation*}
2 \int_{-1}^{0} K(v) \left( \sigma_{ij}(x_0) + hv \sigma'_{ij}(x_0) + \frac{\sigma''_{ij}(y(v))(hv)^2}{2} \right) dv
\end{equation*}
where $y(v)$ is a point between $x_0$ and $x_0 + hv$. Extracting the term $\sigma_{ij}(x_0)$ and using the kernel property $2\int_{-1}^0 K(v)dv = 1$, we obtain:
\begin{equation*}
= \sigma_{ij}(x_0) + 2 \int_{-1}^{0} K(v) \left( hv \sigma'_{ij}(x_0) + \frac{C(hv)^2}{2} \right) dv.
\end{equation*}

Evaluating the remaining integral:
\begin{align*}
&2 h \sigma'_{ij}(x_0) \int_{-1}^{0} v K(v) \, dv + \frac{Ch^2}{2} \int_{-1}^{0} v^2 K(v) \, dv \\
&\leq h \sigma'_{ij}(x_0) + \frac{C h^2}{4}.
\end{align*}

This demonstrates that the expected Global Similarity Graph consists of the true local structure at $x_0$ plus an error term determined by the bandwidth $h$. Therefore:
\begin{equation*}
\Phi_1(i,j) - \sigma_{ij}(x_0) = O(h).
\end{equation*}
\end{proof}
\begin{lemma}[Large Deviation of Graph Structures]
\label{lemma:deviation}
For a sufficiently small $\epsilon$, the probability that the empirical Global Similarity Graph $\hat{\mathbf{A}}$ deviates significantly from its expected value is exponentially bounded:
\begin{equation}
\mathbb{P}(|\hat{a}_{ij} - \mathbb{E}[\hat{a}_{ij}]| > \epsilon) \leq \exp\left\{ - C n h \epsilon^2 \right\}
\end{equation}
\end{lemma}
We give a simplified proof as follows and full proof later.

Let $A_k = \mathbf{x}_{ki} \mathbf{x}_{kj} - \sigma_{ij}(x_k)$ represent the noise in the $k$-th Local Similarity approximation. Using Markov's inequality and the Moment Generating Function (MGF) for Gaussian products:
\begin{equation}
\mathbb{P}\left(\sum_{k=1}^n \ell_k(x_0) A_k > \epsilon\right) \leq \frac{\mathbb{E}[e^{t\sum \ell_k A_k}]}{e^{nt\epsilon}}
\end{equation}
By optimizing the parameter $t = \frac{\epsilon}{4\Phi_2}$, where $\Phi_2$ is the second-order kernel moment, we obtain the tail bound. This establishes that the Global Similarity Graph $\hat{\mathbf{A}}$ is a \textit{persistent} estimator of the underlying precision structure $\Sigma(x_0)^{-1}$ as $nh \to \infty$.

\begin{proof}
\label{app:proof_full}
Let $A_k = \mathbf{x}_{ki} \mathbf{x}_{kj} - \sigma_{ij}(x_k)$ represent the centered noise of the $k$-th Local Similarity approximation. We aim to bound the probability that the Global Similarity entry $\hat{a}_{ij}$ deviates from its expectation:
\begin{equation*}
\mathbb{P}\left(|\hat{a}_{ij} - \mathbb{E}[\hat{a}_{ij}]| > \epsilon\right) = \mathbb{P}\left(\sum_{k=1}^n \ell_k(x_0) A_k > \epsilon\right)
\end{equation*}

By applying Markov's inequality to the exponential of the weighted sum for $t > 0$:
\begin{equation}
\label{eq:kernel-markov}
\mathbb{P}\left(\sum_{k=1}^n \ell_k(x_0) A_k > \epsilon\right) \leq \frac{\mathbb{E}\left[\exp\left(t \sum_{k=1}^n \frac{2}{h} K(\frac{x_k - x_0}{h}) A_k\right)\right]}{e^{nt\epsilon}}
\end{equation}

To evaluate the expectation, we define the following local moments at time $x_k$:
\begin{itemize}
    \item $a_k, b_k = \frac{2t}{h} K(\frac{x_k - x_0}{h}) (\sigma_i \sigma_j \pm \sigma_{ij})$
    \item $\Phi_m = \frac{1}{n} \sum_{k=1}^n \text{poly}_m(a_k, b_k)$, where $\Phi_2 \propto \frac{1}{h}$ represents the global graph variance.
    \item $M = \max_k \{ \frac{2}{h} K(\frac{x_k - x_0}{h}) \sigma_i(x_k) \sigma_j(x_k) \}$
\end{itemize}

Using the Taylor expansion for the log-moment generating function, we bound the joint expectation of the independent local approximations:
\begin{align*}
\ln \mathbb{E}\left[e^{t \sum \text{Weights} \cdot A_k}\right] &\leq \sum_{k=1}^n \left( -nt\Phi_1 + \frac{1}{2} \ln \frac{1}{(1-a_k)(1+b_k)} \right) \\
&\leq nt^2 \Phi_2 + nt^3 \Phi_3 + \frac{9}{5} nt^4 \Phi_4
\end{align*}

Substituting this back into the probability bound \eqref{eq:kernel-markov} and choosing the optimal step size $t = \frac{\epsilon}{4\Phi_2}$, we simplify the exponent:
\begin{align*}
\mathbb{P}(\dots) &\leq \exp\left( -nt\epsilon + nt^2 \Phi_2 + nt^3 \Phi_3 + \frac{9}{5} nt^4 \Phi_4 \right) \\
&\leq \exp\left( \frac{-n\epsilon^2}{4\Phi_2} \left( 1 - \text{rem}(\epsilon) \right) \right) \\
&\leq \exp\left( -\frac{3nh\epsilon^2}{20 C_1 (\sigma_i^2 \sigma_j^2 + \sigma_{ij}^2)} \right)
\end{align*}

This final inequality relies on the fact that $\Phi_2 \approx \frac{C_1 (\sigma_i^2 \sigma_j^2 + \sigma_{ij}^2)}{h}$. Thus, provided the deviation $\epsilon$ satisfies the regularity condition $\epsilon \leq \frac{\Phi_2}{M}$, the Global Similarity Graph converges exponentially at the rate $nh$.
\end{proof}

We now present a comprehensive theorem that summarizes all the proofs. 
\begin{theorem}[Dynamic Topology Identification]
\label{thm:d2-gnn-full}
Consider a multivariate time series $\mathbf{X} \in \mathbb{R}^{N \times T}$ with an underlying graph topology evolving as a latent process $\hat{\mathbf{A}}_t = \hat{\mathbf{A}}_{t-1} + \mathbf{A}_t$, where the local perturbations follow $\mathbf{A}_t \sim \mathcal{N}(0, \boldsymbol{\Sigma}(t))$. Let the precision matrix be denoted by $\Theta(t) = \boldsymbol{\Sigma}(t)^{-1}$. Under the following conditions:

\begin{enumerate}
    \item \textbf{Temporal Smoothness:} The covariance $\boldsymbol{\Sigma}(t)$ is twice differentiable with respect to $t$, such that $\max_{i,j} \sup_t |\sigma''_{ij}(t)| \leq C$ for some $C > 0$.
    \item \textbf{Sparsity:} The number of active edges $|E_t| = s$ satisfies $(N + s) = o(n^{2/3}/\log N)$.
    \item \textbf{Optimal Bandwidth:} The kernel weights $w_{st} = K(\frac{|s-t|}{h})$ utilize a bandwidth $h \asymp n^{-1/3}$.
\end{enumerate}

The time-varying topology at time $t$ is identified via the $\ell_1$-regularized objective:
\begin{equation}
    \widehat{\Theta}_t = \arg\min_{\Theta \succ 0} \left\{ \text{tr}(\Theta \widehat{\mathbf{A}}_t) - \log|\Theta| + \lambda_n \|\Theta\|_1 \right\}
\end{equation}
where $\widehat{\mathbf{A}}_t = \frac{\sum_s w_{st} \mathbf{A}_s}{\sum_s w_{st}}$ is the kernel-weighted Global Similarity Graph. Then, with probability $1 - o(1)$, the estimator $\widehat{\Theta}_t$ is \textbf{sparsistent}, correctly recovering the zero-pattern of $\Theta(t)$.
\end{theorem}

\end{proof}
\section{Time Complexity Analysis}
\label{app:complexity}
\noindent With a given sequence $\mathbf{X} \in \mathbb{R}^{N \times T}$, let $F$ denote the hidden feature dimension, $|\mathcal{E}|$ the number of edges, \fixme{$k$ the kernel size} and $T_p$ the prediction horizon. The temporal convolution stage (S1) takes $O(N k T \log T)$.  The \proposed (S2) takes $O(k |E| F)$ \fixme{for order $k$}.  The prediction stage (S3) requires $O(N F T_p)$.  Overall, the total complexity is $O(N k T \log T +  k |E| F + N F T_p)$, with memory cost dominated by $O(NF)$ for embeddings. \Cref{fig:complexity} shows that \proposed achieves better efficiency than FourierGNN and TPGNN and achieves a significant performance gain.

\section{Extended Related Work}
\label{app:related work}
\subsubsection{Graph Topology Identification.}  When the graph structure in time series is not available, topology identification is essential for understanding the network. In static settings, under mild assumptions (e.g., Gaussian), the problem reduces to estimating the sample covariance matrix, which can be interpreted as an adjacency matrix with self-loops \cite{wu2020connecting}. Structural equation models (SEM) formulate node values as linear functions of neighbors with noise, enabling adjacency estimation via regression. With prior constraints such as smoothness or sparsity, this leads to Graph LASSO formulations \cite{10.1145/3097983.3098037}. For dynamic settings, empirical covariance is updated via exponentially-weighted moving average~\cite{natali2022learningtimevaryinggraphsonline}. 
\section{Empirical Setup}
\label{app:eval_details}

\subsection{Evaluation on Synthetic Benchmark}
\label[appendix]{app:eval_syn}
\subsubsection{Hyperparameter Tuning.} We only report a subset of the tuned hyper-parameters, which include but are not limited to: learning rate $10^{(-2\sim-4)}$, hidden embedding size $2^{8\sim10}$, batch size $2^{4\sim9}$, number of encoder and decoder layers $1\sim3$, number of attention heads $2\sim8$, kernel sizes $3\sim7$ and rolling window sizes $12\times(1\sim3)$. For the electricity, solar and ETT datasets, we report the best results obtained from \cite{yi2023fouriergnn,wu2023timesnet,wu2020connecting}.

\subsubsection{Experiment Setting of Motivation Example} Each dataset has a length of 2400, with a 60/20/20 train/validation/test split. We utilize \metric to quantify the correlation dynamics within the dataset and employ Mean Absolute Error (MAE) as the evaluation metric, specifically reporting results for a forecasting horizon of $H=3$. 
\paragraph{Details of Synthetic Data Generation} At time step $t$, the signal $X(t) \in \mathbb{R}^{N \times 1}$ is sampled as $X(t) \sim \mathcal{N}\!\big(A(t-1)X(t-1), \, \sigma \big)$, where $\mathcal{N}$ denotes the normal distribution, $\sigma \in \mathbb{R}$ specifies the variance, and $X(0)$ is drawn from a discrete uniform distribution, $A(t-1)$ denotes a transition matrix indicates the underlying graph topology behind time series. We construct $N_w$ transition graphs $(\mathbf{A}_{1}, \dots, \mathbf{A}_{N_w})$, each encoding the structural and transitional statistics of the series within a specific interval. Random walk sequences are generated on these graphs, forming the temporal dynamics: variables correspond to independent walks but share the same transition graph within an interval. {Given a cycle length $T_\mathcal{G} \geq N_w$, we divide it into $N_w$ equal subintervals of length $T_s = T_\mathcal{G} / N_w$.} Each subinterval is assigned a transition matrix, {$\mathbf{A}(t) = G\!\left(\Big\lfloor \tfrac{t \bmod T_\mathcal{G}}{T_s} \Big\rfloor \right)$}, with $\lfloor \cdot \rfloor$ denoting the floor function. Within each $T_s$-interval, we simulate $M$ independent random walks as the $M$-dimensional time series. {The static limit corresponds to $N_w = 1$ (a single fixed transition graph), while smaller $T_s$ (equivalently, larger $N_w$ at fixed $T_\mathcal{G}$) induces faster switching and higher-order correlations.} To further increase stochasticity and realism, we incorporate random rewiring and Bernoulli noise. We highlight two extreme regimes of this construction and generate six multivariate time series datasets of length $2400$. Each dataset is split into training, validation, and test subsets with a $7{:}1{:}2$ ratio. Full algorithmic details, we provide algorithm table in \Cref{algorithm:syn}. 
\begin{algorithm}
\caption{Generating MTS Data with the NPR Model}
\label{algorithm:syn}
\small
\begin{algorithmic}[1]
\Require Total length $T$ of the MTS data, number of variables $N$, number of constant matrices $N_w$
\Require Cycle length $T_\mathcal{G}$, standard deviation $\sigma$, matrix sparsity threshold $\delta$
\Ensure Synthetic MTS data $\mathbf{X} \in \mathbb{R}^{T \times N}$

\State Generate random orthogonal matrix $\mathbf{P} \in \mathbb{R}^{N \times N}$
\For{$i = 1$ to $N_w$}
    \State $\mathbf{\Sigma}_i = \text{diag}(|\mathcal{N}(0,1)|, \dots, |\mathcal{N}(0,1)|)$
    \State $\mathbf{A}_i = \mathbf{P}^\top \mathbf{\Sigma}_i \mathbf{P}$, $\alpha = 0$
    \State $\mathbf{A}_i[\mathbf{G}_i < \alpha] \gets 0$
    \While{sparsity$(\mathbf{G}_i) > \delta$}
        \State $\alpha \gets \alpha + 0.02$
        \State $\mathbf{A}_i[\mathbf{A}_i < \alpha] \gets 0$
    \EndWhile
    \State $\mathbf{A}_i \gets$ symmetric normalized Laplacian of $\mathbf{G}_i$
\EndFor

\State Initialize $\mathbf{X} \gets \mathbf{0} \in \mathbb{R}^{T \times N}$
\State $T_s \gets T_\mathcal{G} / N_w$
\For{$t = 1$ to $T$}
    \If{$(t - 1) \bmod T_\mathcal{G} = 0$}
        \State Initialize $\mathbf{x} \in \mathbb{R}^N$ randomly from $\{-1, -0.5, 0.5, 1\}$
    \Else
        \State $\mathbf{x} \sim \mathcal{N}\left(\mathbf{A}_{\lfloor ((t - 1) \bmod T_\mathcal{G}) / T_s \rfloor} \cdot \mathbf{X}[t - 1], \sigma\right)$
    \EndIf
    \State $\mathbf{X}[t] \gets \mathbf{x}$
\EndFor
\end{algorithmic}
\end{algorithm}

\subsubsection{Implementation} All models are trained under a standardized setup with temporal splits of 60\%/20\%/20\%, and early stopping of epoch 5 is applied based on validation Mean Absolute Error (MAE). We adopt the same hyperparameter grids for parameters embedding sizes $\{32,128,512\}$, hidden sizes $\{16,64,128\}$, learning rates $\{10^{-3},10^{-4},10^{-5}\}$, and batch sizes $\{32,64,128,512\}$.  

\subsection{More Details of RQ2}
\subsubsection{Hyperparameter Tuning.} We performed a grid search over a wide range of hyperparameter values for all reported models. Due to limited space, we only report a subset of the tuned hyperparameters, which include but are not limited to: learning rate $10^{(-2\sim-4)}$, hidden embedding size $2^{8\sim10}$, batch size $2^{4\sim9}$, number of encoder and decoder layers $1\sim3$, number of attention heads $2\sim8$, kernel sizes $3\sim7$ and rolling window sizes $12\times(1\sim3)$. For the baseline result, we report the best results obtained from \cite{yi2023fouriergnn,wu2023timesnet,wu2020connecting}.

\begin{table*}[h]
\centering
\caption{Average MAE and RMSE (mean $\pm$ std) on static and dynamic benchmarks. Standard deviations are computed over 5 random seeds. Across all 18 baselines, \proposed achieves the best aggregate rank on real-world static and dynamic datasets.}
\label{tab:app_results_classic}
\setlength{\tabcolsep}{1pt} 
\renewcommand{\arraystretch}{1}
\resizebox{\linewidth}{!}{%
\begin{tabular}{llccccccccc} 
\toprule
& & \multicolumn{2}{c}{\textbf{Electricity}} 
  & \multicolumn{2}{c}{\textbf{Solar}} 
  & \multicolumn{2}{c}{\textbf{Germany}} 
  & \multicolumn{2}{c}{\textbf{France}}
  & \textbf{Rk} \\
\cmidrule(lr){3-4}\cmidrule(lr){5-6}\cmidrule(lr){7-8}\cmidrule(lr){9-10}
\textbf{Family} & 
& MAE & RMSE 
& MAE & RMSE 
& MAE & RMSE 
& MAE & RMSE 
& \\
\cmidrule(lr){1-2}\cmidrule(lr){3-10}

\multirow{3}{*}{\textbf{SpatialGNNs}}
& TPGNN      
& \cellcolor{RoyalBlue!10}.055{\tiny$\pm$.001} & \cellcolor{RoyalBlue!10}.080{\tiny$\pm$.002} 
& \cellcolor{RoyalBlue!10}.123{\tiny$\pm$.003} & .214{\tiny$\pm$.005} 
& .099{\tiny$\pm$.002} & .173{\tiny$\pm$.004} 
& .089{\tiny$\pm$.002} & .158{\tiny$\pm$.004} 
& 3 \\
& GWaveNet   
& .094{\tiny$\pm$.002} & .140{\tiny$\pm$.003} 
& .183{\tiny$\pm$.004} & .238{\tiny$\pm$.004} 
& \cellcolor{RoyalBlue!30}.013{\tiny$\pm$.001} & \cellcolor{RoyalBlue!30}.028{\tiny$\pm$.001} 
& \cellcolor{RoyalBlue!30}.012{\tiny$\pm$.001} & \cellcolor{RoyalBlue!10} .025{\tiny$\pm$.002} 
& 6 \\
& MTGNN        
& .077{\tiny$\pm$.001} & .113{\tiny$\pm$.002} 
& .151{\tiny$\pm$.003} & .207{\tiny$\pm$.005} 
& \cellcolor{RoyalBlue!10}.016{\tiny$\pm$.001} & \cellcolor{RoyalBlue!10}.034{\tiny$\pm$.002} 
& \cellcolor{RoyalBlue!30}.012{\tiny$\pm$.001} & \cellcolor{RoyalBlue!30}.023{\tiny$\pm$.002} 
& 4 \\
\cmidrule(lr){1-2}\cmidrule(lr){3-10}

\multirow{2}{*}{\textbf{SpectralGNNs}}
& FourierGNN 
& \cellcolor{RoyalBlue!30}.051{\tiny$\pm$.001} & \cellcolor{RoyalBlue!30}.077{\tiny$\pm$.002} 
& \cellcolor{RoyalBlue!30}.120{\tiny$\pm$.004} & \cellcolor{RoyalBlue!30}.162{\tiny$\pm$.006} 
& .110{\tiny$\pm$.003} & .186{\tiny$\pm$.005} 
& .096{\tiny$\pm$.002} & .164{\tiny$\pm$.004} 
& 2 \\
& StemGNN 
& .070{\tiny$\pm$.002} & .101{\tiny$\pm$.003} 
& .176{\tiny$\pm$.006} & .222{\tiny$\pm$.008} 
& .179{\tiny$\pm$.005} & .285{\tiny$\pm$.009} 
& \cellcolor{RoyalBlue!10}.148{\tiny$\pm$.004} & .206{\tiny$\pm$.007} 
& 5 \\
\cmidrule(lr){1-2}\cmidrule(lr){3-10}

\multirow{3}{*}{\textbf{Decomposition}}
& TimeMixer  
& .091{\tiny$\pm$.000} & .147{\tiny$\pm$.000} 
& .166{\tiny$\pm$.001} & .211{\tiny$\pm$.002} 
& .181{\tiny$\pm$.005} & .314{\tiny$\pm$.007} 
& .167{\tiny$\pm$.004} & .279{\tiny$\pm$.006} 
& 13 \\
& TimeXer    
& .198{\tiny$\pm$.001} & .302{\tiny$\pm$.001} 
& .139{\tiny$\pm$.002} & .243{\tiny$\pm$.003} 
& .143{\tiny$\pm$.006} & .315{\tiny$\pm$.008} 
& .210{\tiny$\pm$.007} & .395{\tiny$\pm$.009} 
& 14 \\
& PatchTST   
& .212{\tiny$\pm$.001} & .309{\tiny$\pm$.000} 
& .188{\tiny$\pm$.001} & .305{\tiny$\pm$.002} 
& .153{\tiny$\pm$.004} & .332{\tiny$\pm$.006} 
& .219{\tiny$\pm$.005} & .408{\tiny$\pm$.008} 
& 15  \\
\cmidrule(lr){1-2}\cmidrule(lr){3-10}

\multirow{3}{*}{\textbf{Transformer}}
& Autoformer 
& .056{\tiny$\pm$.001} & .083{\tiny$\pm$.003} 
& .150{\tiny$\pm$.002} & .193{\tiny$\pm$.002} 
& .204{\tiny$\pm$.001} & .376{\tiny$\pm$.002} 
& .165{\tiny$\pm$.001} & .263{\tiny$\pm$.002} 
& 8 \\
& FEDformer  
& \cellcolor{RoyalBlue!10}.055{\tiny$\pm$.000} & .081{\tiny$\pm$.001} 
& .139{\tiny$\pm$.001} & \cellcolor{RoyalBlue!10}.182{\tiny$\pm$.001} 
& .271{\tiny$\pm$.002} & .396{\tiny$\pm$.001} 
& .220{\tiny$\pm$.001} & .291{\tiny$\pm$.002} 
& 11 \\ 
& Informer   
& .070{\tiny$\pm$.002} & .119{\tiny$\pm$.001} 
& .151{\tiny$\pm$.002} & .199{\tiny$\pm$.001} 
& .283{\tiny$\pm$.003} & .324{\tiny$\pm$.004} 
& .137{\tiny$\pm$.003} & .217{\tiny$\pm$.009} 
& 7 \\
\cmidrule(lr){1-2}\cmidrule(lr){3-10}
\multirow{3}{*}{\textbf{Sequence}}
& SFM
& .086{\tiny$\pm$.002} & .129{\tiny$\pm$.003} 
& .161{\tiny$\pm$.003} & .283{\tiny$\pm$.006} 
& .184{\tiny$\pm$.002} & .245{\tiny$\pm$.005} 
& .156{\tiny$\pm$.003} & .238{\tiny$\pm$.005} 
& 10 \\
& TCN          
& .057{\tiny$\pm$.001} & .083{\tiny$\pm$.002} 
& .176{\tiny$\pm$.004} & .222{\tiny$\pm$.004} 
& .187{\tiny$\pm$.001} & .287{\tiny$\pm$.006} 
& .172{\tiny$\pm$.002} & .260{\tiny$\pm$.002} 
& 9 \\
& LSTNet      
& .075{\tiny$\pm$.002} & .138{\tiny$\pm$.003} 
& .148{\tiny$\pm$.003} & .200{\tiny$\pm$.004} 
& .193{\tiny$\pm$.004} & .346{\tiny$\pm$.007} 
& .177{\tiny$\pm$.004} & .263{\tiny$\pm$.003} 
& 12 \\
\cmidrule(lr){1-2}\cmidrule(lr){3-10}

\multirow{2}{*}{\textbf{Others}}
& DLinear     
& .058{\tiny$\pm$.001} & .092{\tiny$\pm$.002} 
& .257{\tiny$\pm$.005} & .313{\tiny$\pm$.006} 
& .266{\tiny$\pm$.003} & .368{\tiny$\pm$.007} 
& .196{\tiny$\pm$.004} & .259{\tiny$\pm$.006} 
& 16 \\
& VAR          
& .096{\tiny$\pm$.002} & .155{\tiny$\pm$.003} 
& .175{\tiny$\pm$.004} & .222{\tiny$\pm$.004} 
& .243{\tiny$\pm$.005} & .381{\tiny$\pm$.008} 
& .177{\tiny$\pm$.004} & .260{\tiny$\pm$.005} 
& 17 \\
\midrule
\textbf{Ours}
& \textbf{\proposed} 
& \cellcolor{RoyalBlue!45}\textbf{.010}{\tiny$\pm$\textbf{.001}} & \cellcolor{RoyalBlue!45}\textbf{.051}{\tiny$\pm$\textbf{.002}}
& \cellcolor{RoyalBlue!45}\textbf{.024}{\tiny$\pm$\textbf{.003}} & \cellcolor{RoyalBlue!45}\textbf{.061}{\tiny$\pm$\textbf{.005}}
& \cellcolor{RoyalBlue!45}\textbf{.005}{\tiny$\pm$\textbf{.001}} & \cellcolor{RoyalBlue!45}\textbf{.020}{\tiny$\pm$\textbf{.003}}
& \cellcolor{RoyalBlue!45}\textbf{.010}{\tiny$\pm$\textbf{.002}} & \cellcolor{RoyalBlue!45}\textbf{.020}{\tiny$\pm$\textbf{.004}}
& \textbf{1} \\
\bottomrule
\end{tabular}
}
\end{table*}
\subsection{More Details of RQ3}
\label{app:details}
\subsubsection{Dataset \& Metrics. } We finally evaluate \proposed on a real-world high-TCV data set. Specifically, we select a subset from energy datasets in \cite{shao2025data}, \emph{Germany}, \emph{France} and two public datasets (electricity and solar). It records electricity production in Germany and France in all generation types for all available years. We use the consistent data splits (70\%/20\%/10\%) for all models and report \emph{Mean Absolute Error (MAE)} and \emph{Root Mean Squared Error (RMSE)} of forecasting horizons with 12 time steps on five random seeds with enabled early stop. The datasets span a wide correlation spectrum. For instance, Germany and France exhibit larger magnitudes of \metric{} (near 1) compared to the classic datasets (around 0.5), indicating stronger temporal fluctuations and heterogeneous regional dynamics. For other results on dynamic datasets please refer to \Cref{tab:energy}. 
\subsubsection{Hyperparameter Tuning.} We performed a grid search over a wide range of hyperparameter values for all reported models. We report the tuned hyper-parameters, which include but are not limited to: learning rate $10^{(-2\sim-4)}$, hidden embedding size $2^{8\sim10}$, batch size $2^{4\sim9}$, number of encoder and decoder layers $1\sim3$, number of attention heads $2\sim8$, kernel sizes $3\sim7$ and rolling window sizes $12\times(1\sim3)$. For the electricity, solar and ETT datasets, we report the best results obtained from \cite{yi2023fouriergnn,wu2023timesnet}. 
\begin{table}[H]
\centering
\caption{Dataset statistics and corresponding TCV values.}
\label{tab:real_datasets}
\begin{tabular}{lcccccc}
\toprule
Dataset & $N$ & Rows & Windows & $B$ & Step & TCV  \\
\midrule
Germany  & 16  & 333{,}060 & 333{,}037 & 24 & 1 & \TCVGermany \\
France   & 10  & 83{,}265  & 83{,}242  & 24 & 1 & \TCVFrance \\
Exchange Rate         & 8   & 7{,}588   & 7{,}565   & 24 & 1 & \TCVExchange \\
Electricity           & 321 & 26{,}304  & 26{,}281  & 24 & 1 & \TCVElectricity \\
ETTh1   & 7   & 17{,}420  & 17{,}397  & 24 & 1 & \TCVETThOne \\
Solar   & 137 & 52{,}560  & 12{,}342  & 24 & 1 & \TCVSolar \\
\bottomrule
\end{tabular}
\end{table}
\begin{table}[H]
  \centering
  \caption{Mean MAE/RMSE for two dynamic-correlation energy datasets. We report MAE, RMSE and MAPE at horizons 6 and 12. Best, second-best and third-best results are highlighted.}
  \label{tab:energy}
  \resizebox{\textwidth}{!}{
    \begin{tabular}{l|ccc|ccc|ccc|ccc|c}
      \toprule
      & \multicolumn{6}{c|}{Germany (TCV = \TCVGermany)} & \multicolumn{6}{c|}{France (TCV = \TCVFrance)} & \\
      \cmidrule(lr){2-7}\cmidrule(lr){8-13}      
      Methods & \multicolumn{3}{c|}{H=6} & \multicolumn{3}{c|}{H=12} & \multicolumn{3}{c|}{H=6} & \multicolumn{3}{c}{H=12} & Rank \\
      \cmidrule(lr){2-4}\cmidrule(lr){5-7}\cmidrule(lr){8-10}\cmidrule(lr){11-13}
      & MAE & RMSE & MAPE & MAE & RMSE & MAPE & MAE & RMSE & MAPE & MAE & RMSE & MAPE & \\
      \midrule
      VAR           & 0.210 & 0.340 & 0.191 & 0.243 & 0.381 & 0.215 & 0.155 & 0.241 & 0.132 & 0.177 & 0.260 & 0.151 & 11 \\
      DLinear       & 0.230 & 0.320 & 0.201 & 0.266 & 0.368 & 0.231 & 0.170 & 0.230 & 0.140 & 0.196 & 0.259 & 0.158 & 12 \\
      LSTNet        & 0.165 & 0.290 & 0.145 & 0.193 & 0.346 & 0.172 & 0.150 & 0.220 & 0.129 & 0.177 & 0.263 & 0.146 & 9 \\
      Autoformer    & 0.177 & 0.325 & 0.157 & 0.204 & 0.376 & 0.181 & 0.143 & 0.230 & 0.122 & 0.165 & 0.263 & 0.141 & 8.5 \\
      Informer      & 0.248 & 0.299 & 0.211 & 0.283 & 0.324 & 0.239 & \cellcolor{RoyalBlue!5}0.119 & \cellcolor{RoyalBlue!5}0.194 & \cellcolor{RoyalBlue!5}0.102 & \cellcolor{RoyalBlue!5}0.137 & \cellcolor{RoyalBlue!5}0.217 & \cellcolor{RoyalBlue!5}0.116 & 5 \\
      Reformer      & 0.259 & 0.331 & 0.222 & 0.297 & 0.361 & 0.247 & 0.124 & 0.208 & 0.106 & 0.141 & 0.233 & 0.119 & 10 \\
      \midrule
      FourierGNN    & 0.097 & 0.166 & 0.086 & 0.110 & 0.186 & 0.098 & 0.084 & 0.143 & 0.073 & 0.096 & 0.164 & 0.083 & 3 \\
      StemGNN       & 0.155 & 0.259 & 0.137 & 0.179 & 0.285 & 0.158 & 0.128 & 0.189 & 0.109 & 0.148 & 0.206 & 0.124 & 7.5 \\
      TPGNN         & 0.086 & 0.152 & 0.079 & 0.099 & 0.173 & 0.090 & 0.077 & 0.139 & 0.070 & 0.089 & 0.158 & 0.081 & 2 \\
      MTGNN         & \cellcolor{RoyalBlue!5}0.013 & \cellcolor{RoyalBlue!5}0.031 & \cellcolor{RoyalBlue!5}0.030 & \cellcolor{RoyalBlue!5}0.016 & \cellcolor{RoyalBlue!5}0.034 & \cellcolor{RoyalBlue!5}0.038 & \cellcolor{RoyalBlue!5}0.010 & \cellcolor{RoyalBlue!5}0.020 & 0.070 & \cellcolor{RoyalBlue!5}0.012 & \cellcolor{RoyalBlue!5}0.023 & 0.093 & 2.5 \\
      GraphWaveNet  & \cellcolor{RoyalBlue!20}0.009 & \cellcolor{RoyalBlue!20}0.021 & \cellcolor{RoyalBlue!20}0.028 & \cellcolor{RoyalBlue!20}0.013 & \cellcolor{RoyalBlue!20}0.028 & \cellcolor{RoyalBlue!20}0.040 & \cellcolor{RoyalBlue!20}0.010 & \cellcolor{RoyalBlue!20}0.020 & \cellcolor{RoyalBlue!20}0.069 & \cellcolor{RoyalBlue!20}0.012 & \cellcolor{RoyalBlue!20}0.025 & \cellcolor{RoyalBlue!20}0.092 & 1.5 \\
      \midrule
      \proposed  & \cellcolor{RoyalBlue!40}0.008 & \cellcolor{RoyalBlue!40}0.019 & \cellcolor{RoyalBlue!40}0.023 & \cellcolor{RoyalBlue!40}0.010 & \cellcolor{RoyalBlue!40}0.022 & \cellcolor{RoyalBlue!40}0.030 & \cellcolor{RoyalBlue!40}0.008 & \cellcolor{RoyalBlue!40}0.016 & \cellcolor{RoyalBlue!40}0.060 & \cellcolor{RoyalBlue!40}0.010 & \cellcolor{RoyalBlue!40}0.020 & \cellcolor{RoyalBlue!40}0.079 & 0.5 \\
      \bottomrule
    \end{tabular}
  }
\end{table}
\section{Ablation Study}
\label{app:details of ablation study}
\paragraph{Ablation on Dynamic Graph Design} We evaluate the incremental contributions of our two design choices under the same training protocol. Recall that \textbf{D1} removes the second (adaptive) graph and keeps a single fixed/base graph; \textbf{D2} denotes dual-graph variants with targeted edits: (\emph{i}) \textbf{D2-Random} replaces the base graph with a random topology while retaining the adaptive branch, and (\emph{ii}) \textbf{D2-noGraph} disables the adaptive branch while keeping the base graph. Unless otherwise noted, depth and optimization are identical across variants. For tables reporting $H\in\{3,6,12\}$, the value at $H{=}K$ is the \emph{prefix-mean} over the first $K$ horizons.

\paragraph{Results on \textbf{Synthetic}.}
Across difficulty levels, the full model consistently yields the lowest MAE/RMSE at both short and long horizons. On the Easy split, our model is best at $H{=}1$ and maintains the lead through $H{=} 12$; as difficulty increases (Medium $\rightarrow$ Hard/Very Hard), the margin becomes more visible. These trends support that (i) keeping two graphs and (ii) mixing multi-hop information are complementary; removing either part (\textbf{D1} or \textbf{D2-Random}) hurts most at short horizons, while corrupting the base topology (\textbf{D2-Random}) mainly degrades late-horizon stability.
\paragraph{Results on \textbf{Exchange Rate}.}
We observe the same pattern. The full model (\textsc{\proposed}) outperforms \textbf{D1} and \textbf{D2} for $H{=}3,6,12$. \textbf{D2-noGraph} is notably worse at $H{=}3$, indicating the adaptive branch is crucial for near-term predictions. \textbf{D2-Random} remains competitive at short horizons (the adaptive branch compensates for an imperfect base graph) but lags behind at $H{=}12$. Overall, dual-graph aggregation with power mixing is the most robust across horizons.

\paragraph{Ablation on Dynamic Graph Construction.} We further evaluate the effect of different dynamic graph constructions under the same training protocol. Specifically, \textit{Data-Corr} constructs the dynamic graph from the raw-signal local correlation defined in Eq.~\eqref{eq:corr_graph}, while \textit{Data-Gradient} replaces the raw signal with first-order temporal differences to emphasize transient changes defined in Eq.~\eqref{eq:grad_graph}. We keep model depth, optimization settings, and all other components identical across variants. For each dataset, we report MAE/RMSE at horizons $H\in\{3,6,12\}$ and compute the degradation separately within each dataset. We report the result in \Cref{tab:ablation-corr-grad}.

\paragraph{Conclusion.}
The results show that the preferred dynamic graph construction method depends on the dataset's temporal characteristics. On \textsc{Exchange Rate}, the raw-signal data-correlation graph achieves slightly lower errors, especially for RMSE, suggesting that direct local correlations are sufficient when the underlying dynamics are relatively smooth and stable. In contrast, on \textsc{Syn-Medium} and \textsc{Syn-Very Hard}, the gradient-based graph consistently improves both MAE and RMSE across all prediction horizons, with average reductions of $2.1\%$/$4.3\%$ and $5.2\%$/$4.4\%$ in MAE/RMSE, respectively. This supports the motivation of using first-order temporal differences to isolate transient shocks and rapidly changing dependencies. Overall, the ablation confirms that gradient-based dynamic graphs are particularly beneficial in strongly time-varying regimes, while raw correlation graphs can remain competitive for smoother real-world signals.

\begin{table}[H]
\centering
\small
\renewcommand{\arraystretch}{1.15}
\caption{\textsc{Exchange Rate} forecasting results. Prefix-mean MAE/RMSE over prediction horizons $H$. Lower is better.}
\label{tab:exch-graph-prefix-mean}
\begin{tabular}{l|ccc|ccc}
\toprule
& \multicolumn{3}{c|}{MAE} & \multicolumn{3}{c}{RMSE} \\
Method & H=3 & H=6 & H=12 & H=3 & H=6 & H=12 \\
\midrule
D2-Random   & .009 & .009 & .010 & .009 & .012 & .015 \\
D2-NoGraph  & .009 & .011 & .011 & .009 & .017 & .019 \\
\textbf{D2 (Ours)} & \textbf{.005} & \textbf{.006} & \textbf{.007}
& \textbf{.008} & \textbf{.009} & \textbf{.012} \\
\bottomrule
\end{tabular}
\end{table}
\begin{table}[H]
\centering
\scriptsize
\renewcommand{\arraystretch}{1.2}
\caption{Ablation study of \proposed. Blue cells indicate the largest degradation relative to the full model (\textbf{Ours}). Removing either D1 or D2 consistently degrades performance, with D2 contributing more substantially overall}
\label{tab:ablation-merged}
\resizebox{\columnwidth}{!}{
\begin{tabular}{ll | *{3}{wc{25pt}} | wc{35pt} | *{3}{wc{25pt}} | wc{35pt} | *{3}{wc{25pt}} | wc{35pt}}
\toprule
& & \multicolumn{4}{c|}{Exchange Rate} & \multicolumn{4}{c|}{Syn-Medium} & \multicolumn{4}{c}{Syn-Very Hard} \\
Methods & Metrics & 3 & 6 & 12 & \textbf{Deg.} & 3 & 6 & 12 & \textbf{Deg.} & 3 & 6 & 12 & \textbf{Deg.} \\
\midrule
\multirow{2}{*}{Data-Gradient}
& MAE  & .006 & \cellcolor{RoyalBlue!20}\textbf{.006} & \cellcolor{RoyalBlue!20}\textbf{.007} & +6.7\% & \cellcolor{RoyalBlue!20}\textbf{.402} & \cellcolor{RoyalBlue!20}\textbf{.448} & \cellcolor{RoyalBlue!20}\textbf{.489} & -2.1\% & \cellcolor{RoyalBlue!20}\textbf{.439} & \cellcolor{RoyalBlue!20}\textbf{.455} & \cellcolor{RoyalBlue!20}\textbf{.465} & -5.2\% \\
& RMSE & .009 & \cellcolor{RoyalBlue!20}\textbf{.009} & .013 & +12.3\% & \cellcolor{RoyalBlue!20}\textbf{.529} & \cellcolor{RoyalBlue!20}\textbf{.552} & \cellcolor{RoyalBlue!20}\textbf{.562} & -4.3\% & \cellcolor{RoyalBlue!20}\textbf{.546} & \cellcolor{RoyalBlue!20}\textbf{.547} & \cellcolor{RoyalBlue!20}\textbf{.544} & -4.4\% \\
\midrule
\multirow{2}{*}{Data-Corr}
& MAE  & \cellcolor{RoyalBlue!20}\textbf{.005} & \cellcolor{RoyalBlue!20}\textbf{.006} & \cellcolor{RoyalBlue!20}\textbf{.007} & -- & .410 & .461 & .496 & -- & .462 & .480 & .491 & -- \\
& RMSE & \cellcolor{RoyalBlue!20}\textbf{.007} & \cellcolor{RoyalBlue!20}\textbf{.009} & \cellcolor{RoyalBlue!20}\textbf{.012} & -- & .544 & .572 & .602 & -- & .564 & .571 & .578 & -- \\
\bottomrule
\end{tabular}
}
\end{table}
\begin{table}[H]
\centering
\scriptsize
\renewcommand{\arraystretch}{1.2}
\caption{Ablation study on the construction of the dynamic graph in \proposed. Data-Corr is slightly better on Exchange Rate, whereas Data-Gradient consistently improves performance on Syn-Medium and Syn-Very Hard, indicating its advantage in capturing rapidly changing dynamics.}
\label{tab:ablation-corr-grad}
\resizebox{\columnwidth}{!}{
\begin{tabular}{ll | *{3}{wc{25pt}} | wc{35pt} | *{3}{wc{25pt}} | wc{35pt} | *{3}{wc{25pt}} | wc{35pt}}
\toprule
& & \multicolumn{4}{c|}{Exchange Rate} & \multicolumn{4}{c|}{Syn-Medium} & \multicolumn{4}{c}{Syn-Very Hard} \\
Methods & Metrics & 3 & 6 & 12 & \textbf{Deg.} & 3 & 6 & 12 & \textbf{Deg.} & 3 & 6 & 12 & \textbf{Deg.} \\
\midrule
\multirow{2}{*}{Data-Gradient}
& MAE  & .006 & \cellcolor{RoyalBlue!20}\textbf{.006} & \cellcolor{RoyalBlue!20}\textbf{.007} & +6.7\% & \cellcolor{RoyalBlue!20}\textbf{.402} & \cellcolor{RoyalBlue!20}\textbf{.448} & \cellcolor{RoyalBlue!20}\textbf{.489} & -2.1\% & \cellcolor{RoyalBlue!20}\textbf{.439} & \cellcolor{RoyalBlue!20}\textbf{.455} & \cellcolor{RoyalBlue!20}\textbf{.465} & -5.2\% \\
& RMSE & .009 & \cellcolor{RoyalBlue!20}\textbf{.009} & .013 & +12.3\% & \cellcolor{RoyalBlue!20}\textbf{.529} & \cellcolor{RoyalBlue!20}\textbf{.552} & \cellcolor{RoyalBlue!20}\textbf{.562} & -4.3\% & \cellcolor{RoyalBlue!20}\textbf{.546} & \cellcolor{RoyalBlue!20}\textbf{.547} & \cellcolor{RoyalBlue!20}\textbf{.544} & -4.4\% \\
\midrule
\multirow{2}{*}{Data-Corr}
& MAE  & \cellcolor{RoyalBlue!20}\textbf{.005} & \cellcolor{RoyalBlue!20}\textbf{.006} & \cellcolor{RoyalBlue!20}\textbf{.007} & -- & .410 & .461 & .496 & -- & .462 & .480 & .491 & -- \\
& RMSE & \cellcolor{RoyalBlue!20}\textbf{.007} & \cellcolor{RoyalBlue!20}\textbf{.009} & \cellcolor{RoyalBlue!20}\textbf{.012} & -- & .544 & .572 & .602 & -- & .564 & .571 & .578 & -- \\
\bottomrule
\end{tabular}
}
\end{table}
\end{document}